\documentclass{article} % For LaTeX2e
\usepackage{iclr2026_conference,times}

\usepackage{amsmath,amsfonts,bm}

\def\eqref#1{equation~\ref{#1}}
\def\1{\bm{1}}

\DeclareMathAlphabet{\mathsfit}{\encodingdefault}{\sfdefault}{m}{sl}
\SetMathAlphabet{\mathsfit}{bold}{\encodingdefault}{\sfdefault}{bx}{n}

\usepackage{hyperref}
\usepackage{url}
\usepackage{mathtools}

\definecolor{myblue}{HTML}{0000cc}

\hypersetup{ %
pdftitle={},
pdfauthor={},
pdfsubject={},
pdfkeywords={},
pdfborder=0 0 0,
pdfpagemode=UseNone,
colorlinks=true,
linkcolor=myblue,
citecolor=myblue,
filecolor=myblue,
urlcolor=myblue,    
pdfview=FitH}

\usepackage{amsfonts}
\usepackage{amssymb}
\usepackage{bm}
\usepackage{amsmath}
\usepackage{amsthm}
\usepackage{stmaryrd}
\usepackage{color}
\usepackage{xcolor}
\usepackage[subnum]{cases}
\usepackage{rotating}
\usepackage{enumitem}
\usepackage{accents}
\usepackage[title]{appendix}
\usepackage{mathtools}
\usepackage{caption}
\usepackage{url}
\usepackage{rotating}
\usepackage{framed}
\usepackage{lscape}
\usepackage{multirow}
\usepackage{latexsym}
\usepackage{mathrsfs}
\usepackage[new]{old-arrows}
\usepackage{array}
\usepackage{makecell}
\usepackage{float}
\usepackage{tabularray}% for long table
\usepackage{tablefootnote}
\usepackage{tikz}
\usepackage[all]{xy}
\usepackage{tikz-cd}
\usepackage[usestackEOL]{stackengine}
\usepackage{pdflscape}
\usepackage{autobreak}
\usepackage{comment}
\usepackage{booktabs}
\usepackage{graphicx}

\theoremstyle{plain}
\newtheorem{theorem}{Theorem}[section]
\newtheorem{proposition}[theorem]{Proposition}

\theoremstyle{definition}
\newtheorem{definition}[theorem]{Definition}

\newtheorem{remark}[theorem]{Remark}

\usepackage{algorithmic}
\usepackage{algorithm}

\usepackage{adjustbox}
\usepackage{booktabs}
\usepackage{siunitx}
\usepackage{listings}
\usepackage{colortbl}
\usepackage{tabularx}
\usepackage{enumitem}

\definecolor{inputcolor}{HTML}{c43800}
\definecolor{outputcolor}{HTML}{030fff}

\usepackage{titletoc}

\newcommand\DoToC{%
  \startcontents
  \printcontents{}{1}{\textbf{Table of Contents}\vskip3pt\hrule\vskip5pt}
  \vskip3pt\hrule\vskip5pt
}

\newcommand\blfootnote[1]{%
  \begingroup
  \renewcommand\thefootnote{}\footnote{#1}%
  \addtocounter{footnote}{-1}%
  \endgroup
}
\renewcommand{\geq}{\geqslant}
\newcommand{\ReLU}{\operatorname{ReLU}}

\title{Quasi-Equivariant Metanetworks}

\author{\normalfont
\\
  \parbox{0.45\textwidth}{
    \textbf{Viet-Hoang Tran}\thanks{Equal Contribution} \\
    Department of Mathematics \\
    National University of Singapore \\
    \texttt{hoang.tranviet@u.nus.edu} \\
  }
  \hfill
  \parbox{0.45\textwidth}{
    \textbf{An Nguyen}\footnotemark[1] \\
    FPT Software AI Center \\
    Vietnam\\
    \texttt{annt68@fpt.com}\\
  }
  \\[3em]
  \parbox{0.45\textwidth}{
    \textbf{Beno\^{\i}t Gu{\'e}rand} \\
    Department of Mathematics \\
    National University of Singapore \\
    \texttt{benoit.guerand@u.nus.edu}\\
  }
  \hfill
  \parbox{0.45\textwidth}{
    \textbf{Thieu N. Vo} \\
    Department of Mathematics \\
    National University of Singapore \\
    \texttt{thieuvo@nus.edu.sg}\\
  }
  \\[3em]
  \parbox{0.45\textwidth}{
    \textbf{Tan M. Nguyen} \\
    Department of Mathematics \\
    National University of Singapore \\
    \texttt{tanmn@nus.edu.sg}\\
  }
}
\iclrfinalcopy % Uncomment for camera-ready version, but NOT for submission.
\begin{document}

\maketitle

\begin{abstract}
Metanetworks are neural architectures designed to operate directly on pretrained weights to perform downstream tasks. However, the parameter space serves only as a proxy for the underlying function class, and the parameter-function mapping is inherently non-injective: distinct parameter configurations may yield identical input-output behaviors. As a result, metanetworks that rely solely on raw parameters risk overlooking the intrinsic symmetries of the architecture. Reasoning about functional identity is therefore essential for effective metanetwork design, motivating the development of equivariant metanetworks, which incorporate equivariance principles to respect architectural symmetries. Existing approaches, however, typically enforce strict equivariance, which imposes rigid constraints and often leads to sparse and less expressive models. To address this limitation, we introduce the novel concept of quasi-equivariance, which allows metanetworks to move beyond the rigidity of strict equivariance while still preserving functional identity. We lay down a principled basis for this framework and demonstrate its broad applicability across diverse neural architectures, including feedforward, convolutional, and transformer networks. Through empirical evaluation, we show that quasi-equivariant metanetworks achieve good trade-offs between symmetry preservation and representational expressivity. These findings advance the theoretical understanding of weight-space learning and provide a principled foundation for the design of more expressive and functionally robust metanetworks.
\blfootnote{Please correspond to: hoang.tranviet@u.nus.edu and tanmn@nus.edu.sg}
\end{abstract}

%------------------------------------

\section{Introduction}
Modern problem-solving increasingly relies on neural networks, which encode vast amounts of information within their trainable parameters during learning, ranging the application from computer vision \citep{huang2020neural, imagenetdnn, He2015DeepRL}, natural language processing \citep{vaswani2017attention, Rumelhart1986LearningIR, lstm, deepseekai2025deepseekv3technicalreport}, and nature science \citep{RAISSI2019686,alphafold}. While these parameters capture rich knowledge, accessing and interpreting it remains a challenge. 

\textbf{Metanetworks. }Metanetworks were introduced to analyze and process other neural networks by treating their weights, gradients, and sparsity patterns as structured inputs. Early work focused on evaluating their generalization and revealing properties of neural network behavior \citep{baker2017accelerating, eilertsen2020classifying, unterthiner2020predicting, schurholt2021self, schurholt2022hyper, schurholt2022model}. Common strategies include flattening parameters or extracting statistics before feeding them into multi-layer perceptrons (MLPs) \citep{unterthiner2020predicting, dupont2022data, de2023deep}. Beyond these foundations, metanetworks have been applied to extracting structure from implicit representations \citep{muller2023mothernet, stanley2007compositional, mildenhall2021nerf}, developing learnable optimizers \citep{bengio2013optimization, runarsson2000evolution, andrychowicz2016learning, metz2022velo}, performing model editing \citep{sinitsin2020editable, de2021editing, mitchell2021fast}, evaluating policies \citep{harb2020policy}, and enabling Bayesian inference \citep{sokota2021fine}. Nevertheless, designing metanetworks remains challenging due to the complexity and high dimensionality of the underlying structures.
\begin{figure}[t]
    \centering
    \includegraphics[width=0.9\linewidth]{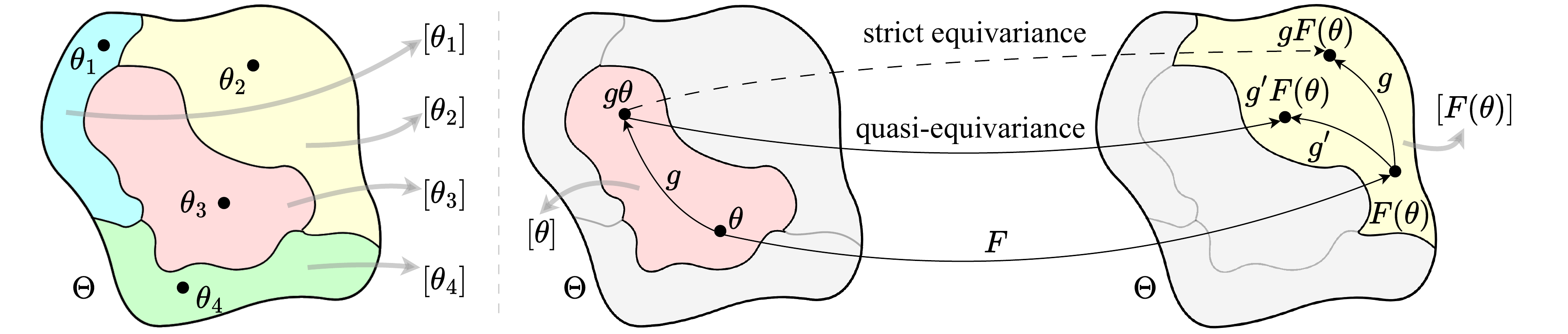}
    \caption{\small \textit{(Left)} Illustration of the partition of parameter space into functional equivalence classes. \textit{(Right}) Illustration of the quasi-equivariance property and its distinction from strict equivariance.}
    \label{fig:main_figure}
    % \vspace{-20pt}
\end{figure}

\textbf{Functional Equivalence. }A major challenge in designing metanetworks lies on how to capture functional equivalence - the fact that multiple distinct parameter configurations can realize the same input-output function \citep{allen2019convergence, belkin2019reconciling, du2019gradient, frankle2019lottery, novak2018sensitivity}. This problem was first posed by Hecht-Nielsen~\citep{hecht1990algebraic}. A key observation is that swapping two hidden units in an MLP leaves its input–output mapping unchanged, provided their outgoing connections are permuted accordingly~\citep{allen2019convergence, du2019gradient, frankle2018lottery, belkin2019reconciling, neyshabur2018towards}.
For the same class of MLPs, \citet{fefferman1993recovering} established a stronger result: the input–output mapping of an MLP with tanh activations uniquely determines both its architecture and its weights, up to permutations and sign flips. Subsequent work extended these identifiability results to broader MLP settings~\citep{albertini1993neural, albertini1993identifiability, bui2020functional, chen1993geometry, kuurkova1994functionally} and, in parallel, to convolutional neural networks (CNNs)~\citep{brea2019weight, novak2018sensitivity, bui2020functional, tran2024monomial, vo2025equivariant}. 

\textbf{Equivariant Metanetworks. }Building on the insight from permutation invariance in neural networks, researchers have developed permutation-equivariant metanetworks \citep{navon2023equivariant, zhou2024permutation, kofinas2024graph, zhou2024neural}, which naturally account for neuron reordering within hidden layers. More recent architectures extend beyond permutation equivariance by incorporating additional symmetries such as scaling and sign changes \citep{kalogeropoulos2024scale, tran2024monomial, vo2025equivariant}. Furthermore, recent works \citep{tran2025equivariant, knyazev2024accelerating,tran2025linear} have characterized the maximal symmetry groups of Multihead Attention and Mixture-of-Experts, establishing necessary and sufficient conditions for functional equivalence, offering new insights into the structural properties of Transformer and Mixture-of-Experts weights.

Despite advances in metanetwork design, most approaches enforce strict equivariance at the level of individual weights. However, the true goal is not to preserve the weights themselves, but to capture the \emph{functions} they implement--that is, to respect the \emph{functional equivalence classes} defined by the parameters. In this context, relaxed or approximate equivariance has emerged in deep learning to handle imperfect symmetries in real-world data. Early approaches include weight-relaxed convolutions \citep{wang2022approximately}, soft constraints via multitask losses \citep{elhag2024relaxed, Pertigkiozoglou2024improving}, G-biases for group convolutions \citep{wu2025relaxed}, and extensions to $E(3)$-equivariant graph networks \citep{hofgard2024relaxed}. On the theoretical side, \citet{kaba2023symmetry} and \citet{huang2022equivariant} formalized relaxed equivariance and analyzed its bias–variance trade-offs. Together, these insights motivate a class of metanetwork architectures that relax strict weight-level equivariance, enabling more flexible representations of functional symmetries.

\textbf{Contributions. }Building on this motivation, we introduce a framework for \emph{quasi-equivariant metanetworks}--a novel paradigm that relaxes strict equivariance to balance symmetry preservation with representational flexibility. The paper is organized as follows:

\begin{enumerate}
    \item In Section~\ref{sec:preliminary}, we examine the parameter space of a parameterized function, characterize its associated symmetry group, and introduce the formal notion of maximality within symmetry groups, establishing a direct connection to Functional Equivalence. \item In Section~\ref{sec:neccessary-condition}, we examine the sufficiency of strict equivariance for metanetwork design. Building on this, we introduce quasi-equivariance, which enables metanetworks to overcome the limitations of strict equivariance while still maintaining functional identity.    
    \item In Section~\ref{sec:quasi-equiv}, we present a general framework for quasi-equivariant metanetworks and demonstrate its application to feedforward neural networks and multihead attention.
    \item In Section~\ref{sec:experiments}, we integrate the framework into existing metanetworks. Experiments on multiple metanetwork benchmarks show that this layer enhances performance considerably while incurring only a slight increase in the number of parameters.
\end{enumerate}
Supplementary materials, including a comprehensive notation table, theoretical derivations, detailed proofs, and experimental setups, are provided in the Appendix.

\section{Preliminaries on Equivariant Metanetworks}
\label{sec:preliminary}
In this section, we present the details of the parameter space of a parameterized function, its associated symmetry group, and introduce the formal notion of maximality in symmetry groups, which connects directly to the concept of Functional Equivalence (FE).
\subsection{Parameter Space of a Parameterized Function and Its Symmetry Group}

\textbf{Parameter space.} 
Let $f(\cdot; \theta)$ be a function parameterized by $\theta \in \Theta = \mathbb{R}^{\text{dim}}$. 
The set $\Theta$ is called the \textit{parameter space} (or \textit{weight space}) of $f$. 
Assume a group $G$ acts on $\Theta$. For each $\theta \in \Theta$, we define the set of parameter vectors that yield functionally equivalent models:
\begin{align}
    [\theta] \coloneqq \left\{ \bar{\theta} \in \Theta ~\middle|~ f(\cdot; \bar{\theta}) = f(\cdot; \theta) \right\} \subseteq \Theta.
\end{align}
The parameter space serves merely as a proxy for the function class, and the mapping $\theta \mapsto f(\cdot;\theta)$ is non-injective, as distinct parameter configurations can yield identical behaviors. This phenomenon is illustrated in Figure~\ref{fig:main_figure}. FE thus focuses on characterizing the sets $[\theta]$. Explicitly enumerating all such sets is impractical. A more systematic approach is to view these equivalence classes as orbits under a group action on $\Theta$, naturally leading to the notion of the \emph{symmetry group} of $f$.

\textbf{Symmetry group.} Consider a group $G$ acting on the space $\Theta$. For $\theta \in \Theta$, the $G$-orbit of $\theta$ is defined as $G\theta \coloneqq \{ g\theta \mid g \in G \} \subseteq \Theta$.
We now introduce the following definition.

\begin{definition}[Symmetry Group]
A group $G$ is called a \textit{symmetry group} of the function $f$ if $G\theta \subseteq [\theta]$ for all $\theta \in \Theta$. 
Equivalently, for every $g \in G$ and $\theta \in \Theta$, one has $f(\cdot; g\theta) = f(\cdot; \theta)$.
\end{definition}

The phrase “a symmetry group” acknowledges that multiple such groups may exist. In particular, every subgroup of a symmetry group is itself a symmetry group. Our goal is to represent the equivalence classes $[\theta]$ using $G$-orbits. To build intuition, we present two following observations.

\textit{First observation.} 
Consider the function $f(\cdot; a,b) \colon \mathbb{R} \to \mathbb{R}$, defined by $x \mapsto abx$, parameterized by $\theta = (a,b) \in \mathbb{R}^2 = \Theta$. 
It is straightforward to see that $(a,b)$ and $(\bar{a}, \bar{b})$ yield the same function if and only if $ab = \bar{a}\bar{b}$. 
This naturally suggests the following group action: let $\mathbb{R}^\times$ denote the multiplicative group of nonzero real numbers. 
Define the action of $c \in \mathbb{R}^\times$ on $(a,b) \in \mathbb{R}^2$ by $c\cdot(a,b) \mapsto (ac,c^{-1}b)$. It is straightforward to verify that $\mathbb{R}^\times$ is a symmetry group of $f$. 
However, it does not fully capture the equivalence classes. 
Indeed, for $(a,b) \in \mathbb{R}^2$ with $ab \neq 0$, one has
\begin{align}
    [(a,b)] 
    = \{ (\bar{a},\bar{b}) \in \mathbb{R}^2 \mid ab = \bar{a}\bar{b}\} = \{ (ac, c^{-1}b) \mid c \in \mathbb{R}^\times \} 
    = \mathbb{R}^\times (a,b).
\end{align}
In contrast, for $(a,b) \in \mathbb{R}^2$ with $ab = 0$, one obtains $
    [(a,b)] = \mathbb{R}^\times(1,0) \,\sqcup\, \mathbb{R}^\times(0,1) \,\sqcup\, \mathbb{R}^\times(0,0).$
Thus, while $\mathbb{R}^\times$ almost completely describes the functional partition, it fails on the degenerate subset of the parameter space where $ab=0$. 
It is difficult to identify a larger natural group that extends the action to cover these exceptional cases.

\textit{Second observation.}
Classical group theory ensures that any partition of a set can be realized as the orbit decomposition of some group action. Accordingly, there always exists a group $G$ and an action of $G$ on $\Theta$ such that the $G$-orbits match the functional partition. However, constructing such a group typically requires explicit mappings, which are often intractable and impractical. In the context of parameterized models, where $\Theta$ is a finite-dimensional real space, it is natural to focus on group actions arising from standard operators such as addition, multiplication, or permutation.

These observations present a trade-off: the \textit{tractability} of the group and its action versus the \textit{descriptive capacity} of the functional partition, motivating the notion of maximality of symmetry groups.

\textbf{Maximal symmetry group.} The above observations lead to the following intuitive and informal description of a maximal symmetry group: 
\begin{center}
\textit{Under generic parameters, the symmetry group $G$ captures all functional equivalences, \\ up to a sufficiently small exceptional set.}    
\end{center}

In other words, let $\varepsilon$ denote a sufficiently small subset of $\Theta$, and consider the restricted domain $\Theta \setminus \varepsilon$. The group action of $G$ on $\Theta$ restricts naturally to $\Theta \setminus \varepsilon$. Then, for all $\theta, \bar{\theta} \in \Theta \setminus \varepsilon$ such that $f(\cdot;\theta) = f(\cdot;\bar{\theta})$, there exists $g \in G$ with $\bar{\theta} = g\theta$. Hence, although there may exist parameters in $\Theta$ for which $G$ does not capture FE, this exceptional set is negligible, and $G$ may still be regarded as characterizing FE of $\Theta$. The subset $\varepsilon$ is typically taken to coincide with the zero set of finitely many nonzero polynomials, that is, a proper real algebraic variety, consistent with prior work on FE in neural architectures~\citep{hecht1990algebraic,fefferman1993recovering,bui2020functional}.
\begin{definition}[Maximal symmetry group]\label{main:def_maximal_group}
        A symmetry group $G$ is said to be \textit{maximal} if there exists a proper real algebraic variety $\varepsilon \subsetneq \Theta$ such that, for all $\theta, \bar\theta \in \Theta \setminus \varepsilon$, whenever $f(\cdot;\theta) = f(\cdot;\bar{\theta})$, there exists $g \in G$ with $\bar{\theta} = g\theta$.
\end{definition}
\begin{remark}
In the above observation on $f(\cdot; a,b)$, let $\varepsilon = \{(a,b) \in \mathbb{R}^2 : ab = 0\}$. 
Then $\varepsilon$ is a proper real algebraic variety, and the group $\mathbb{R}^\times$ serves as a maximal symmetry group of $f$.
\end{remark}
In the next section, we demonstrate that this notion of maximality coincides with prior analyses of FE in feedforward and convolutional neural networks, as well as in multihead attention.

\subsection{On the Role of Equivariance in Metanetworks}
\label{section_On the Role of Equivariance in Metanetworks}
A \textit{metanetwork} is a map $F \colon \Theta \to \mathcal{X}$ that takes as input the parameters of a model. 
Depending on the application, $F$ may return another element of $\Theta$ (as in network editing tasks) or a vector in $\mathbb{R}^d$ for some integer $d$ (as in prediction tasks). 
The fundamental objective is to determine whether the parameters of a model contain sufficient information to reveal properties of the function realized by the model itself. 
Since $F$ receives $\theta$ as input, it is natural to require that $F$ depend only on the underlying function represented by $\theta$, rather than on the particular parameterization. 
Equivalently, the input of $F$ should be the equivalence class $[\theta]$, as all elements of $[\theta]$ define the same function. 
It would be undesirable for $F(\theta)$ and $F(\bar{\theta})$ to produce incompatible outcomes whenever $[\theta] = [\bar{\theta}]$.

A principled approach to this requirement is to impose equivariance or invariance with respect to a symmetry group $G$. 
In particular, suppose $F \colon \Theta \to \Theta$ is $G$-equivariant. 
By definition, this means
\begin{align}
    F(g\theta) = gF(\theta),~ \text{ for all} ~~ g \in G,~\theta \in \Theta.    
\end{align}
Consequently, the equivalence classes are preserved in the sense that $[F(g\theta)] = [gF(\theta)] = [F(\theta)]$, thereby ensuring consistency across parameterizations that correspond to the same function. 
If $G$ is a maximal symmetry group of the underlying model, such equivariance is sufficient to guarantee that $F$ operates solely on the functional content of $\theta$. 
This observation underscores the importance of characterizing the maximal symmetry group--equivalently, of understanding FE--as a prerequisite for the systematic study of equivariant metanetworks.

\section{Is Strict Equivariance Necessary for Metanetwork?}
\label{sec:neccessary-condition}
As discussed in Section~\ref{section_On the Role of Equivariance in Metanetworks}, equivariance provides a principled mechanism for preserving the functional behavior of input networks. Nevertheless, equivariance should be regarded as a \emph{sufficient} condition for such preservation, rather than a necessary one. This naturally leads to the question:
\begin{align*}
    \textit{Is strict equivariance necessary for metanetworks?}
\end{align*}
We now introduce a broader notion, namely quasi-equivariance. Throughout the remainder of the paper, let $G$ denote the maximal symmetry group, and let $F$ denote a metanetwork map.

\textbf{Quasi-equivariance.} We first address equivariance, deferring the discussion of invariance to a later stage. 
The requirement of functionality preservation for a map $F \colon \Theta \to \Theta$ can be stated as follows: 
for all $\bar{\theta} \in [\theta]$, one requires that $F(\bar{\theta}) \in [F(\theta)]$. By the maximality of $G$, the condition $\bar{\theta} \in [\theta]$ implies that there exists $g \in G$ such that $\bar{\theta} = g\theta$. 
The same holds for $F(\theta)$ and $F(\bar{\theta})$. 
Consequently, the above requirement can be reformulated, motivating the following definition.
\begin{definition}[Quasi-equivariance]\label{main:def_Quasi-equivariance}
    A map $F \colon \Theta \to \Theta$ is said to be \emph{$G$-quasi-equivariant} if, for all $g \in G$ and $\theta \in \Theta$, there exists $g' = g'(g,\theta) \in G$ such that $F(g\theta) = g'F(\theta)$.
\end{definition}
The notation $g' = g'(g,\theta)$ emphasizes that $g'$ may depend on both $g$ and $\theta$. Figure~\ref{fig:main_figure} illustrates the quasi-equivariance property. By definition, every $G$-equivariant map is also $G$-quasi-equivariant. Moreover, $G$-quasi-equivariance ensures functionality preservation. Given the maximality of $G$ (Definition~\ref{main:def_maximal_group}), it provides a \textit{necessary and sufficient} condition for a map to preserve functionality. Indeed, for $\theta, \bar{\theta} \in \Theta$ such that $[\theta] = [\bar{\theta}]$, one has $\bar{\theta} = g\theta$ for some $g \in G$. Thus, $F(\bar\theta) = F(g\theta) = g'F(\theta)$ for some $g' \in G$. Therefore, $[F(\bar\theta)] = [g'F(\theta)] = [F(\theta)]$.

\begin{remark} \label{main:remark-welldefined}
A natural question is how to construct a map $F$ that satisfies the quasi-equivariant property. By Definition~\ref{main:def_Quasi-equivariance}, one natural attempt is to first choose an arbitrary group-valued function $\alpha : G \times \Theta \to G$ and then solve for a map $F : \Theta \to \Theta$ satisfying $F(g\theta) = \alpha(g,\theta)\, F(\theta)$. However, for a general choice of $\alpha$, such an $F$ does not exist. Appendix~\ref{appendix:welldefined} provides the necessary conditions on $\alpha$ under which at least one corresponding $F$ can exist. Although this approach is theoretically motivated, it is not practical for constructing metanetworks. Therefore, in Section~\ref{sec:quasi-equiv}, we will present a more effective and implementable design for $F$.
\end{remark}

\textbf{Invariance.} For invariance, introducing a quasi-version is unnecessary. 
Indeed, to ensure that a map $F \colon \Theta \to \mathcal{X}$ preserves functionality, it suffices to require $F(\bar{\theta}) = F(\theta)$,
which is equivalent to $F(g\theta) = F(\theta)$.
Hence, strict invariance is necessary.

\textbf{Properties.} In practice, equivariant and invariant metanetworks are constructed by stacking equivariant and invariant layers on top of one another, in the same manner as deep models are typically built.
This construction relies on standard closure properties: the composition of two equivariant maps is equivariant, and the composition of an equivariant map with an invariant map is invariant. 
For quasi-equivariance, analogous properties hold, as stated in the following result.
\begin{proposition}[Composition]\label{main:prop_stacking}
Let $\varphi, \psi \colon \Theta \to \Theta$ be maps. Then:
\begin{enumerate}
    \item If both $\varphi$ and $\psi$ are $G$-quasi-equivariant, then $\psi \circ \varphi$ is $G$-quasi-equivariant.
    \item If $\varphi$ is $G$-quasi-equivariant and $\psi$ is $G$-invariant, then $\psi \circ \varphi$ is $G$-invariant.
\end{enumerate}
\end{proposition}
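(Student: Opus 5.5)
Both parts follow by unwinding Definition~\ref{main:def_Quasi-equivariance}, applying it twice in succession. The only subtlety is that the intermediate group element depends on the point at which the outer map is evaluated. We track that dependence explicitly.

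For part 1, fix $g \in G$ and $\theta \in \Theta$.
\begin{enumerate}
    \item Since $\varphi$ is $G$-quasi-equivariant, there exists $g_1 = g_1(g,\theta) \in G$ with $\varphi(g\theta) = g_1\varphi(\theta)$.
    \item Set $\eta \coloneqq \varphi(\theta) \in \Theta$.
    \item Apply the quasi-equivariance of $\psi$ to the pair $(g_1, \eta)$. This gives $g_2 = g_2(g_1,\eta) \in G$ with $\psi(g_1\eta) = g_2\psi(\eta)$.
    \item Chaining these,
    \begin{align*}
        (\psi\circ\varphi)(g\theta) = \psi(g_1\varphi(\theta)) = g_2\,\psi(\varphi(\theta)) = g_2\,(\psi\circ\varphi)(\theta).
    \end{align*}
    \item Define $g'(g,\theta) \coloneqq g_2\big(g_1(g,\theta), \varphi(\theta)\big)$. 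This is an element of $G$ determined by $g$ and $\theta$ alone, so $\psi\circ\varphi$ satisfies the definition.
\end{enumerate}

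For part 2, fix $g$ and $\theta$ as before and take the same $g_1$ with $\varphi(g\theta) = g_1\varphi(\theta)$. Invariance of $\psi$ means $\psi(h\eta) = \psi(\eta)$ for all $h \in G$ and $\eta \in \Theta$. Applying this with $h = g_1$ and $\eta = \varphi(\theta)$ gives $(\psi\circ\varphi)(g\theta) = \psi(\varphi(\theta))$. The dependence of $g_1$ on $(g,\theta)$ is harmless here, because invariance holds uniformly over all group elements.

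There is no substantive obstacle. The one point to make explicit is that $g'$ in Definition~\ref{main:def_Quasi-equivariance} may depend on both the group element and the base point. Consequently, in the composition the second map is evaluated at the base point $\varphi(\theta)$ with group element $g_1(g,\theta)$, not at $\theta$ with $g$. Since the definition allows arbitrary dependence, the composite element $g_2(g_1(g,\theta),\varphi(\theta))$ is a legitimate choice, and no compatibility, cocycle or well-definedness condition (as in Remark~\ref{main:remark-welldefined}) needs checking. Maximality of $G$ is not used in either part.
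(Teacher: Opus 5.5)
Your proposal is correct and matches the paper's proof step for step. The paper likewise applies quasi-equivariance of $\varphi$ to obtain $h_1$, then applies quasi-equivariance (or invariance) of $\psi$ at the pair $(h_1,\varphi(\theta))$, and chains the two equalities.
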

The assumption that $\varphi, \psi \colon \Theta \to \Theta$ is made for notational simplicity. 
In fact, the results remain valid if $\Theta$ is replaced by any domain on which the notions of $G$-quasi-equivariance and $G$-invariance are well-defined. 
The proof of Proposition~\ref{main:prop_stacking} is provided in Appendix~\ref{appendix:proof_stacking}.

\begin{remark}[Comparison with relaxed notions of equivariance in the literature]
In \citet{kaba2023symmetry}, the notion of \emph{relaxed equivariance} is introduced. 
Given a group $G$ acting on $\mathcal{X}$ and $\mathcal{Y}$, a map $\varphi \colon \mathcal{X} \to \mathcal{Y}$ is said to satisfy relaxed equivariance if, for all $g \in G$ and $x \in \mathcal{X}$, there exists 
$g' \in gG_{x}$--with $G_x$ the stabilizer subgroup of $x$--such that $\varphi(gx) = g'\varphi(x)$. This definition is naturally subsumed under the broader notion of quasi-equivariance given in Definition~\ref{main:def_Quasi-equivariance}.
Other works, such as \citet{wang2024rethinking}, employ a related but distinct perspective, where relaxed equivariance is interpreted as an approximation to strict equivariance, namely 
\(\varphi(gx) \approx g\varphi(x)\).
\end{remark}

\section{Quasi-Equivariant Metanetworks}
\label{sec:quasi-equiv}
In this section, we establish a general framework for quasi-equivariant metanetworks and subsequently apply it to feedforward neural networks and multihead attention.

\subsection{A General Framework for the Design of Quasi-Equivariant Metanetworks}
Given the notation $f$, $\theta$, $\Theta$, and $G$ from the previous section, we focus on constructing $G$–quasi-equivariant networks. The invariant case is immediate, since it can be obtained by stacking an invariant layer on top of an equivariant backbone, as observed in Proposition~\ref{main:prop_stacking}.

A $G$–quasi-equivariant layer is defined as follows. Let $\alpha \colon \Theta \to G$ be a map into the group, and let $\beta \colon \Theta \to \Theta$ be an equivariant map. Define
\begin{align}
F \colon \Theta \to \Theta, \qquad F(\theta) \coloneqq \alpha(\theta)\beta(\theta).
\end{align}
By construction, $F$ is $G$-quasi-equivariant. In this framework, the design of $\beta$ follows directly from prior work on equivariant metanetworks. The central task is therefore to construct $\alpha$ so that it outputs group elements of $G$. This extension is motivated by the observation that strict equivariance is not necessary for metanetworks, and that enforcing it often yields sparse models due to the strong constraints imposed on the network weights. By introducing $\alpha$, we aim to relax these constraints, thereby improving both the expressivity and performance of metanetworks.

We now examine several representative cases of $G$, assuming $\alpha$ is continuous. In machine learning, continuity--and in practice differentiability--is essential for gradient-based optimization via backpropagation. The parameterized maps $f$ considered here will primarily be feedforward networks, convolutional neural networks, and multihead attention modules, as these are the predominant architectures in existing datasets of pretrained weights.
\begin{remark}
In the instances considered in the next part, the group $G$, although a group, can also be embedded into $\mathbb{R}^n$ for some $n$. In this setting, the group-valued map $\alpha : \Theta \to G$ may be regarded as continuous. We then recall the classical fact that the continuous image of a connected space is connected. Since $\Theta =  \mathbb{R}^d$ is connected, the image $\alpha(\Theta)$ must also be connected. Consequently, if $G$ is discrete, $\alpha$ must be constant. This observation will be useful in the next part: when $G$ contains a discrete component (such as permutations), any continuous $\alpha$ cannot meaningfully vary over $\Theta$. Hence, the discrete part of $G$ can be ignored in the construction, and the focus is placed on the continuous component of $G$.
\end{remark}

\subsection{The Case of Feedforward and Convolutional Neural Networks}
We primarily focus on the feedforward neural network. The convolutional counterpart can be treated in an analogous manner without loss of generality.

\textbf{Parameter space.} Consider a feedforward neural network $f$ with $L$ layers, having $n_i$ neurons in the $i^{\text{th}}$ layer and activation $\sigma$. Here, $n_0$ and $n_L$ are the input and output dimensions. The map $f$ is parameterized by $\theta = \{W_i,b_i\}_{i=1}^L$, where $W_i \in \mathbb{R}^{n_i \times n_{i-1}}$ and $b_i \in \mathbb{R}^{n_i}$. It is expressed as:
\begin{align} \label{eq:mlp-equation}
    f(x;\theta) = f_L \circ \sigma \circ f_{L-1} \circ \sigma \circ \cdots  \circ \sigma \circ f_{1}(x),
\end{align}
where $f_i \colon \mathbb{R}^{n_{i-1}} \rightarrow \mathbb{R}^{n_i}$ such that $x \mapsto W_i \cdot x + b_i$. The \textit{parameter space} of $f$ is:
\begin{align}\label{eq:mlp-weight-space}
    \Theta = \left(\mathbb{R}^{n_L \times n_{L-1}} \times \mathbb{R}^{n_L} \right) \times \cdots \times \left(\mathbb{R}^{n_2 \times n_{1}} \times \mathbb{R}^{n_2} \right) \times \left(\mathbb{R}^{n_1 \times n_{0}} \times \mathbb{R}^{n_1} \right)
\end{align}
\textbf{Maximal symmetry group.} We define a group action on $\Theta$ by monomial matrices. Let $n$ be a positive integer.  A \textit{monomial matrix} of size $n \times n$ is a matrix in which each row and each column contains exactly one nonzero entry. Denote $\mathcal{G}_n^{>0}$ as the sets of monomial matrices of size $n \times n$ with all non-zero entries positive. Now, define the group $
G \coloneqq \mathcal{G}^{>0}_{n_{L-1}} \times \ldots \times \mathcal{G}^{>0}_{n_{1}}.$ Denote $g = (g_{L-1},\ldots,g_1)$, where $g_i \in \mathcal{G}_{n_i}$, for elements of $G$. By convention, denote $g_L = I_{n_L}$ and $g_0 = I_{n_0}$, which are identity matrices. The \textit{group action} of $G$ on $\Theta$ is defined by 
\begin{align}
    g\theta \coloneqq \{\bar W_i,\bar b_i\}_{i=1}^L, ~\text{where}
    ~ \bar{W}_{i} = g_i \cdot W_i \cdot g_{i-1}^{-1} ~ \text{ and } ~ \bar{b}_{i} = g_{i} \cdot b_i.
\end{align} 
It is straightforward to check that $G$ forms a symmetry group for $f$. One expects $G$ to be maximal, however, for the general setting of $f$, it is an open question whether $G$ is maximal. Prior studies only proved $G$ to be maximal when restricted to a restricted setting. For instance, if $n_L \geq \ldots \geq n_2 \geq n_1 > n_0 = 1$, then $G$ is maximal \citep{bui2020functional,grigsby2023hidden}.

\textbf{Design of the map $\alpha$.} First, we decompose the group $G$ into factors $\mathcal{G}_{n_i}$ for $i \in [L-1]$, and construct maps $\Theta \to \mathcal{G}_{n_i}$ for each $i$. More generally, the goal is to define a map $\Theta \to \mathcal{G}_{n}$ for an arbitrary positive integer $n$. To this end, we further analyze the structure of $\mathcal{G}_n$ by decomposing it as follows. Define $\mathcal{P}_n$ as the set of monomial matrices whose nonzero entries are all equal to $1$, that is, the set of permutation matrices. Consider also the set of $n \times n$ diagonal matrices with positive diagonal entries, which is isomorphic to $\mathbb{R}_{>0}^n$, where $\mathbb{R}_{>0}$ denotes the multiplicative group of positive real numbers. Every monomial matrix in $\mathcal{G}_n$ can be expressed uniquely as the product of such a diagonal matrix and a permutation matrix, that is,
    \begin{equation}
        \mathcal{G}_n = \{D P ~ : ~ D \in \mathbb{R}_{>0}^n \text{ and } P \in \mathcal{P}_n\}.
    \end{equation}
Formally, $\mathcal{G}_n$ is isomorphic to the semidirect product $\mathcal{G}_n = \mathbb{R}_{>0}^n \rtimes \mathcal{P}_n$ (see \citet{dummit2004abstract}). Thus, constructing $\alpha \colon \Theta \to \mathcal{G}_n$ reduces to specifying two maps: one into $\mathcal{P}_n$ and the other into $\mathbb{R}_{>0}^n$.

\emph{The case of the group $\mathcal{P}_n$.}  
Since $\mathcal{P}_n$ is discrete, any continuous map $\alpha \colon \Theta \to \mathcal{P}_n$ must be constant.

\emph{The case of the group $\mathbb{R}_{>0}^n$.}  
The group $\mathbb{R}_{>0}^n$ can be further decomposed into its coordinate factors, so that the construction of $\alpha \colon \Theta \to \mathbb{R}_{>0}^n$ reduces to specifying $n$ independent maps $\alpha_j \colon \Theta \to \mathbb{R}_{>0}$ for $j \in [n]$. To do this, the main idea is to construct the map from $\theta \in \Theta$ to a vector of size $n$, after which we take the sin of entries, scale it by a small $\epsilon > 0$ and add a unit vector $1_n$. The detailed implementation of $\alpha$ in practice is described in Appendix~\ref{appendix:group-action-design}.

\begin{remark}[Extension to CNN]
For CNNs, the approach follows the same principle but is adapted for convolutional filters. Each bias vector $b_i$ retains the same dimensions as in MLPs, while the convolution filter $W_i \in \mathbb{R}^{n_i \times n_{i-1} \times w}$ (for 1D convolution) or $W_i \in \mathbb{R}^{n_i \times n_{i-1} \times c \times w}$ (for 2D convolution) contains additional spatial dimensions. To handle this, we apply the group action to both the filter's channel dimensions and spatial dimensions. Specifically, the filter $W_i$ is treated as having dimensions $n_i \times n_{i-1} \times (c w)$, where $c$ represents the number of input channels (or more generally, any extra spatial dimensions like spatial channels). This allows us to apply the quasi-equivariant layer to the filter in a manner similar to the MLP case, where we perform the scaling operation across the output channels and input channels.
\end{remark}

\begin{remark}[On activation functions beyond ReLU]
In the literature, FE has also been studied for feedforward neural networks with other activation functions $\sigma$. For instance, in the case of the tanh activation, a maximal symmetry group can be determined (see \citet{chen1993geometry,fefferman1993recovering}). However, for most widely used activation functions, the maximal symmetry group is discrete, making the construction of $\alpha$ trivial. For this reason, our analysis focuses on ReLU.\end{remark}

\subsection{The Case of Multihead Attention}
\textbf{Parameter space.} Let $d$ denote the token dimension, $L$ the sequence length, and $h$ the number of heads, where all are positive integers. Define the space of token sequences as $\mathcal{S} \coloneqq \sqcup_{L=1}^\infty \mathbb{R}^{L \times d}$. For a fixed head dimension $d_h$, let $W^Q_i, W^K_i, W^V_i, W^O_i \in \mathbb{R}^{d \times d_h}$ for each $i \in [h]$, and set $\theta = (W^Q_i, W^K_i, W^V_i, W^O_i)_{i=1}^h$. Given an input sequence $\mathbf{x} = (x_1,\ldots,x_L)^\top \in \mathbb{R}^{L \times d} \subset \mathcal{S}$, the Multihead Attention (MHA) mechanism with $h$ heads is defined by
    \begin{align}\label{main:eq_4}
        &\textnormal{MHA}\left(\mathbf{x} \colon \theta \right) = \sum_{i=1}^h \textup{softmax}\left((\mathbf{x}W^Q_i)\left(\mathbf{x}W^K_i\right)^\top \right) \cdot \left(\mathbf{x}W^V_i\right) (W^O_i)^\top.
    \end{align}
Here, the softmax operator is applied row-wise to the similarity matrix 
$(\mathbf{x} W^Q_i)(\mathbf{x} W^K_i)^\top \in \mathbb{R}^{L \times L}$, producing the attention for $\mathbf{x}$. 
Each row of this matrix forms a probability distribution that determines the relative influence of all input tokens on a given output token. Typically, the head dimension is set to $d_h = d/h$. The parameter space of the MHA map is then defined as $\Theta \coloneqq \left(\mathbb{R}^{d \times d_h}\right)^{4h}$.

We denote by $\mathrm{GL}(d_h)$ the general linear group of degree $d_h$, 
i.e., the set of all invertible $d_h \times d_h$ real matrices.

\textbf{Maximal symmetry group.} Define the following group $G \coloneqq S_h \times \left(\textup{GL}(d_h) \times \textup{GL}(d_h)\right)^h.$
This group is exactly the direct product of the permutation group $S_h$ with $h$ copies of $\textup{GL}(d_h) \times \textup{GL}(d_h)$. Each element $g \in G$ can be written as $g \coloneqq (\sigma, (U_i,V_i)_{i=1}^h)$, where $\sigma \in S_h$ and $U_i, V_i \in \textup{GL}(d_h)$. The group $G$ acts naturally on the parameter space $\Theta$ as follows:
\begin{align}
    g\theta \coloneqq \left(W^Q_{\sigma(i)}\cdot U_i^\top, W^K_{\sigma(i)}\cdot U_i^{-1}, W^V_{\sigma(i)} \cdot V_i^\top, W^O_{\sigma(i)}\cdot V_i^{-1} \right)_{i=1}^h.
\end{align}
It is evident that $G$ serves as a symmetry group of the MHA map. The reasoning is as follows: the general linear action cancels within the matrix multiplications, while the permutation action induced by $\sigma$ commutes with addition. Furthermore, $G$ is maximal, as formalized in the following result.
\begin{theorem}[See \citet{tran2025equivariant}] \label{main:thm_old_result}
    Consider two $\textup{MHA}$ maps with $h$ heads, parameterized by $\theta = (W_i^Q,W_i^K,W^V_i,W^O_i)_{i=1}^h$ and $\bar{\theta} = (\bar{W}^Q_i, \bar{W}^K_i, \bar{W}_i^V, \bar{W}_i^O)_{i=1}^{h}$ in $\Theta$, respectively. Assume that 
\begin{enumerate}[leftmargin=22pt, topsep=1pt] 
        \item All matrices $W_i^Q, W_i^K, W_i^V, W_i^O$ and $\bar{W}_i^Q, \bar{W}_i^K, \bar{W}_i^V, \bar{W}_i^O$, for all feasible $i$, are of rank $d_h$.
        \item From $\theta$, the matrices $\{W^Q_i(W^K_i)^\top\}_{i=1}^h$ are pairwise distinct. The same condition holds for $\bar{\theta}$.
    \end{enumerate}
    If the two MHA maps are identical, there exists $g \in G$ such that $\bar{\theta} = g\theta$.
\end{theorem}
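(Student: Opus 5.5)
The plan is to reduce the functional identity $\textup{MHA}(\cdot;\theta)=\textup{MHA}(\cdot;\bar\theta)$ to an identity between finitely many "head signatures", namely the pairs $A_i := W^Q_i(W^K_i)^\top$ and $B_i := W^V_i(W^O_i)^\top$ in $\mathbb{R}^{d\times d}$. The MHA map depends on $\theta$ only through $(A_i,B_i)_{i=1}^h$. Condition 1 of Theorem~\ref{main:thm_old_result} then lets us lift an equality of signatures back to the factors. Concretely, if $W^Q\bar{}(\bar W^K)^\top = W^Q (W^K)^\top$ with all four matrices of full column rank $d_h$, then the column spaces of $W^Q$ and $\bar W^Q$ coincide, and so do those of $W^K$ and $\bar W^K$. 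Hence $\bar W^Q = W^Q U^\top$ for a unique $U\in\mathrm{GL}(d_h)$, and substituting and cancelling the injective left factor forces $\bar W^K = W^K U^{-1}$. The same argument applies to $(V,O)$ with some $V_i$. So the real content of the proof is to show that there is a permutation $\sigma\in S_h$ with $\bar A_i = A_{\sigma(i)}$ and $\bar B_i = B_{\sigma(i)}$.

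To extract the signatures I would evaluate both MHA maps on specially chosen short sequences and use the linear independence of exponentials. Take $L=2$ with $\mathbf{x}=(x,y)^\top$. The first output row equals
\[
\sum_i \frac{e^{x^\top A_i x}\, x^\top B_i + e^{x^\top A_i y}\, y^\top B_i}{e^{x^\top A_i x}+e^{x^\top A_i y}}.
\]
I would replace $y$ by $ty$ and send $t$ along rays, or more cleanly use longer sequences in which $y$ is repeated $m$ times, to separate terms. Multiplying out the denominators turns the identity $\textup{MHA}_\theta=\textup{MHA}_{\bar\theta}$ into an identity of exponential polynomials in the entries of $x$ and $y$. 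The exponents that appear are the bilinear forms $x^\top A_i x$ and $x^\top A_i y$. By the linear independence of $\{e^{q_j}\}$ for distinct polynomials $q_j$ (a standard lemma), the exponents on both sides must match as multisets. Condition 2 (pairwise distinctness of the $A_i$, and of the $\bar A_i$) guarantees that the exponent families are in bijection, which produces $\sigma$ with $\bar A_i = A_{\sigma(i)}$. Comparing the coefficient multiplying each matched exponential then gives $y^\top \bar B_i = y^\top B_{\sigma(i)}$ for all $y$, hence $\bar B_i = B_{\sigma(i)}$.

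The main obstacle is this exponential-independence step. The denominators couple the heads, so a naive expansion creates products $e^{q_{i}+q_{j}}$ whose exponents can coincide for different index tuples. The coefficients must then be shown not to cancel. I would first try to decouple the heads asymptotically: fix $x$ and scale $y\mapsto ty$ with $t\to\infty$ along a generic direction, where each head's softmax degenerates to a hard selection. This isolates the leading exponential rates $x^\top A_i y$, and genericity of the direction together with Condition 2 makes these rates distinct across heads. If that proves delicate, the fallback is to use derivatives at special points, for example differentiating the output with respect to $y$ at $y=0$, to read off polynomial expressions such as $\sum_i (\ldots)A_i(\ldots)B_i$ and match them. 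Once $\sigma$, $U_i$ and $V_i$ are found, setting $g=(\sigma,(U_i,V_i)_{i=1}^h)$ gives $\bar\theta=g\theta$ by the action formula.
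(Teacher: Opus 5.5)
Your reduction to the head signatures $A_i=W^Q_i(W^K_i)^\top$, $B_i=W^V_i(W^O_i)^\top$ is correct, and so is the full-rank lifting to $U_i,V_i$. The gap is the head-separation step, which is the whole content of the theorem: you do not establish it, and the tools you propose cannot. With two tokens, head $i$ puts weight $\ell\bigl(x^\top A_i(x-y)\bigr)$ on $x$ in the output row of $x$, where $\ell(u)=1/(1+e^{-u})$, and $\ell(u)+\ell(-u)=1$. So a pair of heads $(A,B),(-A,B)$ contributes exactly $(x+y)^\top B$ to each row, independently of $A$. Condition 2 does not forbid $A_j=-A_i$. Take $\theta$ with heads $\{(A,B),(-A,B)\}$ and $\bar\theta$ with heads $\{(A',B),(-A',B)\}$, where $A'\neq\pm A$ and all factors have rank $d_h$. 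Both satisfy the hypotheses and agree on every sequence of length $1$ or $2$, yet they lie in different $G$-orbits, since the multiset of $A$'s is $G$-invariant. Hence nothing extracted from $L=2$ alone can produce $\sigma$: not exponent matching after clearing denominators, not the $y\mapsto ty$ asymptotics, not derivatives at $y=0$. In particular, your claim that a generic direction plus Condition 2 makes the rates distinct is false. The decay rates of the logistic corrections are $|x^\top A_i y|$, which coincide for $\pm A$, and the coefficient cancellations you flagged as the main obstacle genuinely occur.

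The missing idea is to use sequences of length at least $3$ in an essential way. Your parenthetical about repeating $y$ does this if you make the multiplicity $m$ the variable, instead of passing to exponentials in $x,y$ (the identity $\ell(u)+\ell(-u)=1$ is special to $m=1$). For $\mathbf{x}=(x,y,\dots,y)$ with $m$ copies of $y$, the output row of $x$ is
\[
y^\top\sum_i B_i+\sum_i \frac{r_i}{m+r_i}\,(x-y)^\top B_i,\qquad r_i=e^{x^\top A_i(x-y)},
\]
which is a rational function of $m$. Pool the two parameter sets as follows. Let $C_1,\dots,C_p$ be the distinct matrices in $\{A_i\}\cup\{\bar A_j\}$. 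Let $D_k$ be the $B_i$ with $A_i=C_k$ (if any) minus the $\bar B_j$ with $\bar A_j=C_k$ (if any); by Condition 2 there is at most one of each. The pooled identity holds for all integers $m\geq 1$, hence as rational functions of $m$. For $(x,y)$ off the proper algebraic set where two forms $x^\top C_k(x-y)$ coincide, the poles $-r_k$ are distinct. Uniqueness of partial fractions then forces $(x-y)^\top D_k=0$ on a dense set, whence $D_k=0$. Since every $B_i$ and $\bar B_j$ has rank $d_h\geq 1$, each $C_k$ must occur on both sides. This gives $\sigma$ with $\bar A_i=A_{\sigma(i)}$ and $\bar B_i=B_{\sigma(i)}$, and your lifting argument then finishes the proof.
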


\begin{remark}
Note that the conditions on $\theta$ and $\bar{\theta}$ in Theorem~\ref{main:thm_old_result} can both be expressed as the vanishing of finitely many nonzero polynomials. This corresponds precisely to the real algebraic variety $\varepsilon$ introduced in Definition~\ref{main:def_maximal_group} of maximal symmetry groups.
\end{remark}

\textbf{Design of the map $\alpha$.}  
In analogy with the feedforward case, we restrict the construction of $\alpha \colon \Theta \to G$ to the design of a map $\Theta \to \mathrm{GL}(n)$ for a general positive integer $n$. The idea is as follows: first, reshape $\theta \in \Theta$ into an $n \times n$ matrix via a feedforward network $\gamma$. Next, apply the entrywise sine function, scale the result by a small $\epsilon > 0$, and finally add the identity matrix $I_n$, i.e. $\theta \longmapsto \text{sin}(\gamma(\theta))\cdot \epsilon + I_n$. 
By continuity, and since the range of the sine function is $[-1,1]$, there exists a sufficiently small $\epsilon > 0$ such that the resulting matrix is invertible. The detailed implementation of $\alpha$ in practice is described in Appendix~\ref{appendix:group-action-design}.

\begin{remark}
An alternative approach to constructing a map $\Theta \to \mathrm{GL}(n)$ is to use the matrix exponential $\text{exp} \colon \mathbb{R}^{n \times n} \to \text{GL}(n)$. 
However, both in theory and in practice, this approach tends to be slow and numerically unstable. Our experimental trials confirmed these issues, and we therefore did not pursue this direction further.
\end{remark}

\vspace{-8pt}

\begin{figure}[t]
    \centering
    \includegraphics[width=0.9\linewidth]{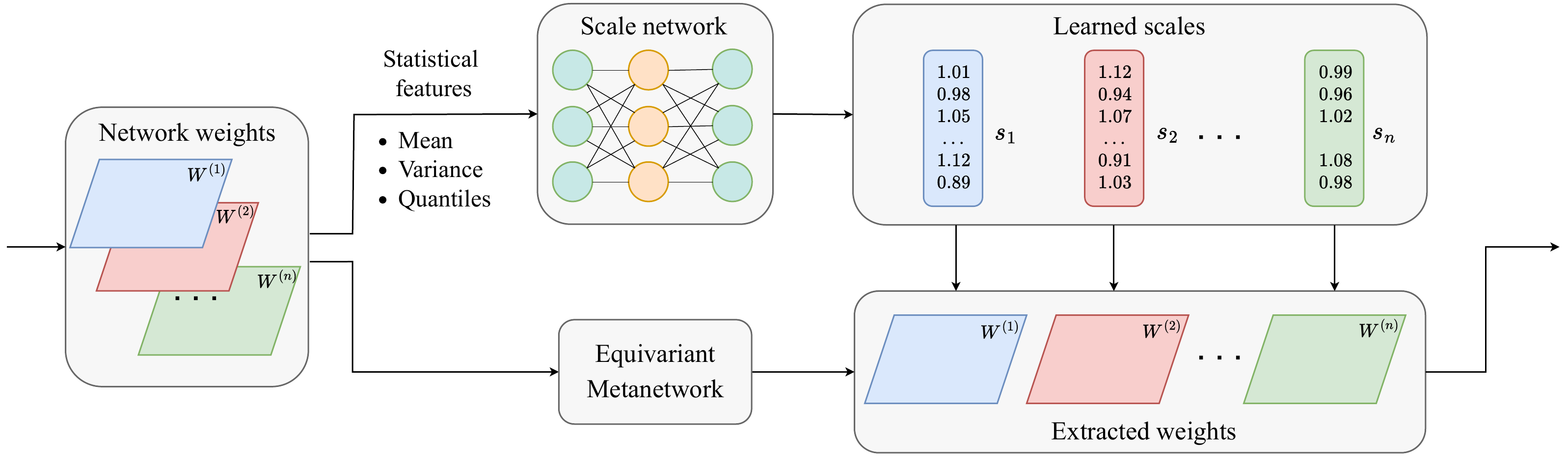}
\caption{\small Illustration of the design of the quasi-equivariant layer. Statistical features are extracted from network weights and biases, then passed through a Scale network to learn the group action. This corresponds to the MLP case, where a scaling vector is learned for each layer’s weights and biases. The learned scales are applied to the outputs of the equivariant layer, enhancing expressiveness while adding only minimal parameters.}    \label{fig:quasi-layer}
\end{figure}

\section{Experiments}
\vspace{-5pt}
\label{sec:experiments}
In this section, we integrate the quasi-equivariant layer with existing metanetworks: Monomial-NFN for MLP/CNNs and Transformer-NFN for Transformers. The overall layer is illustrated in Figure~\ref{fig:quasi-layer}. We provide detailed implementation of the MLP network for each case (MLP/CNN or Transformers) in Appendix~\ref{appendix:group-action-design}.  We evaluate these models on three tasks: predicting CNN generalization from weights, classifying image INRs, and predicting Transformer generalization from parameters. For each task, we compare against the original baseline and an expanded version with more parameters, ensuring fair comparison with our method. Our aim is twofold: the quasi-equivariant layer improves performance efficiently with minimal parameter increase, and it preserves performance under group action transformations. Results are averaged over five runs; hyperparameters and parameter counts are detailed in Appendix~\ref{appendix:experiments}.

    \vspace{-5pt}
\subsection{Predicting CNN Generalization}
    \vspace{-4pt}
\textbf{Experiment Setup.}  
We aim to predict pretrained CNN generalization using only their weights, without test data. Experiments use the Small CNN Zoo dataset~\citep{unterthiner2020predicting}, containing CNNs trained with varying hyperparameters and activations. Following~\citet{tran2024monomial}, we analyze the ReLU subset, where models follow the group action $\mathcal{M}_n^{>0}$. Robustness to group-action transformations is evaluated by augmenting the dataset with variants from diagonal matrices $\mathcal{D}_{n,ii}^{>0} \sim \mathcal{U}[1,10^i]$ for $i \in \{1,2,3,4\}$ and random permutation matrices $\mathcal{P}_n$. Prediction is measured with Kendall’s $\tau$ rank correlation~\citep{kendalltau}, which quantifies agreement between predicted and true accuracy rankings. Our approach extends Monomial-NFN~\citep{tran2024monomial}, denoted Monomial-NFN Quasi, and is compared with STATNN~\citep{unterthiner2020predicting}, NP, HNP~\citep{zhou2024permutation}, and Graph-NN~\citep{kofinas2024graph}. For Monomial-NFN, we test both the original and an enlarged, carefully tuned variant for fair comparison.

\textbf{Results.}  
Table~\ref{tab:cnn-relu} reports model performance. Scaling Monomial-NFN (+68.65\% parameters) gives minor gains, whereas Monomial-NFN Quasi shows notable improvement with only +3.89\% parameters. On the original subset, where Monomial-NFN lags behind HNP, the Quasi layer matches its performance. This holds under augmentation, showing that small parameter increases via the Quasi-equivariant layer can enhance expressiveness and flexibility considerably.\begin{table*}[t]
    \caption{ \small Performance prediction of CNNs on the $\ReLU$ subset of Small CNN Zoo with varying scale augmentations. The metric used is Kendall's $\tau$. Uncertainties indicate the standard error across 5 runs.}
    \label{tab:cnn-relu}
    \centering
    \renewcommand*{\arraystretch}{1.2}
    \begin{adjustbox}{width=1.0\textwidth}
    \begin{tabular}{lcccccc}
        \toprule
        & \multicolumn{5}{c}{Augment settings} \\

        \cmidrule(lr){2-6}
         & No augment & $\mathcal{U}[1, \num{e1}]$ & $\mathcal{U} [1, \num{e2}]$ & $\mathcal{U}[1, \num{e3}]$ & $\mathcal{U}[1, \num{e4}]$\\
        \midrule
         STATNet~\citep{unterthiner2020predicting} & $0.915 \pm 0.002$ & $0.894 \pm 0.0001$ & $0.853 \pm 0.007$ & $0.523 \pm 0.02$ & $0.516 \pm 0.001$ \\
        NP~\citep{zhou2024permutation} & $0.920 \pm 0.003$ & $0.900 \pm 0.002$ & $0.898 \pm 0.003$  & $0.884 \pm 0.002$ & $0.884 \pm 0.002$ \\
        HNP~\citep{zhou2024permutation} & $\mathbf{0.926 \pm 0.003}$ & $0.913 \pm 0.001$ & $0.903 \pm 0.003$ & $0.891 \pm 0.003$ & $0.601 \pm 0.02$ \\
         Graph-NN~\citep{kofinas2024graph} & $0.897 \pm 0.002$ & $0.892 \pm 0.003$ & $0.885 \pm 0.002$ & $0.858 \pm 0.003$ & $0.851 \pm 0.002$ \\
         \midrule
        Monomial-NFN ~\citep{tran2024monomial} & $0.922 \pm 0.001$ & $\underline{0.920 \pm 0.001}$ & $0.919 \pm 0.001$ & $\underline{0.920 \pm 0.002}$ & $\underline{0.920 \pm 0.001}$ \\
        Monomial-NFN large  (\textcolor{teal}{$\mathbf{68.65\%}$ params $++$}) & $\underline{0.923 \pm 0.001}$ & $\underline{0.920 \pm 0.001}$ & $\underline{0.920 \pm 0.002}$ & $0.919 \pm 0.001$ & $\underline{0.920 \pm 0.001}$ \\
        Monomial-NFN Quasi (ours) (\textcolor{teal}{$\mathbf{3.89\%}$ params $++$}) & $\mathbf{0.926 \pm 0.002}$ & $\mathbf{0.924 \pm 0.002}$ & $\mathbf{0.924 \pm 0.002}$ & $\mathbf{0.923 \pm 0.001}$ & $\mathbf{0.924 \pm 0.002}$ \\
      \bottomrule
    \end{tabular}
    \end{adjustbox}
    % \vspace{-12pt}
\end{table*}

\begin{table*}[t]
    \caption{\small Classification train and test accuracies (\%) for implicit neural representations of MNIST, FashionMNIST, and CIFAR-10. Uncertainties indicate standard error over 5 runs.}
    \label{tab:inr-classification}
    \centering
    \renewcommand*{\arraystretch}{1.2}
    \begin{adjustbox}{width=0.75\textwidth}
    \begin{tabular}{lccccc}
        \toprule
        & MNIST & CIFAR-10  & FashionMNIST\\
        \midrule
         MLP & $10.62 \pm 0.54$ & $10.48 \pm 0.74$ & $9.95 \pm 0.36$\\
         NP~\citep{zhou2024permutation} & $\underline{69.82 \pm 0.42}$ & $33.74 \pm 0.26$ & $58.21 \pm 0.31$ \\
         HNP~\citep{zhou2024permutation} & $66.02 \pm 0.51$ & $31.61 \pm 0.22$ & $57.43 \pm 0.46$\\
         \midrule
         Monomial-NFN~\citep{tran2024monomial} & $68.43 \pm 0.51$ & ${34.23 \pm 0.33}$ & ${61.15 \pm 0.55}$\\
         Monomial-NFN tuned (\textcolor{teal}{$\approx \mathbf{3\%}$ params$++$}) & $68.87 \pm 0.42$ & $\underline{34.26 \pm 0.28}$ & $\underline{61.44 \pm 0.35}$\\
         Monomial-NFN Quasi (ours) (\textcolor{teal}{$\mathbf{\approx 3\%}$ params $++$})& $\mathbf{70.21 \pm 0.34}$ & $\mathbf{35.32 \pm 0.56}$ & $\mathbf{62.11 \pm 0.27}$ \\
      \bottomrule
    \end{tabular}
    \end{adjustbox}
    \vspace{-8pt}
\end{table*}
\subsection{Classifying implicit neural representations of images}

\textbf{Experiment Setup. } This experiment focuses on classifying the source class of pretrained Implicit Neural Representation (INR) weights. Following the setup in~\citep{tran2024monomial}, we use three INR weight datasets introduced in~\citep{zhou2024permutation}, each corresponding to a different image dataset: CIFAR-10~\citep{cifar}, FashionMNIST~\citep{fashionmnist}, and MNIST~\citep{mnist}. In these datasets, each INR is trained to represent a single image, encoding image structure by mapping pixel coordinates $(x, y)$ to pixel color values. CIFAR-10 images are encoded as 3-channel RGB outputs, while MNIST and FashionMNIST are represented with a single grayscale channel. Since excessively increasing parameters in Monomial-NFN leads to overfitting in this setting, we introduce a variant, Monomial-NFN tuned, which is carefully adjusted to match the parameter count of Monomial-NFN Quasi. This ensures a fair comparison between the two models.  

\textbf{Results.} Table~\ref{tab:inr-classification} shows the performance of all models on INR classification. The tuned Monomial-NFN, which adds 3\% more parameters, yields only minor improvement. In contrast, adding the proposed quasi layer with the same parameter increase allows Monomial-NFN Quasi to outperform NP on the MNIST task and achieve the best results across all three datasets. The performance gap between Monomial-NFN Quasi and Monomial-NFN is about 1\% on CIFAR-10 and FashionMNIST, and 1.78\% on MNIST. These results demonstrate the consistency of the proposed method.

\begin{table}[t]
    \caption{\small Performance measured by Kendall's $\tau$ of all models on MNIST- and AGNews-Transformers datasets. Uncertainties indicate standard error over 5 runs.}
    \label{tab:transformer-results}
    \vspace{-2pt}
    \centering
    \small
    \renewcommand*{\arraystretch}{1.2}
    \begin{adjustbox}{width=1\textwidth}
    \begin{tabular}{lccccc}
        \toprule
        & \multicolumn{5}{c}{Accuracy threshold} \\
        \cmidrule(lr){2-6}
          & No threshold & $20\%$ & $40\%$ & $60\%$ & $80\%$ \\
        \midrule
        \multicolumn{6}{c}{\textit{MNIST-Transformers}} \\  % Line before MNIST-Transformer
        \midrule
         MLP & $0.866 \pm 0.002$ & ${0.873 \pm 0.001}$ & ${0.874 \pm 0.003}$  & ${0.874 \pm 0.006}$ & $0.873 \pm 0.007$ \\
             STATNN~\citep{unterthiner2020predicting} & ${0.881 \pm 0.001}$ & $0.872 \pm 0.001$ & $0.868 \pm 0.001$ & $0.860 \pm 0.001$ & $0.856 \pm 0.001$ \\
             XGBoost~\citep{Chen_2016xgboost} & $0.860 \pm 0.002$ & $0.839 \pm 0.004$ & $0.869 \pm 0.003$ & $0.846 \pm 0.001$ & ${0.884 \pm 0.001}$  \\
             LightGBM~\citep{ke2017lightgbm} & $0.858 \pm 0.002$ & $0.835 \pm 0.001$ & $0.847 \pm 0.001$ & $0.822 \pm 0.001$ & $0.830 \pm 0.001$ \\
             Random Forest~\citep{breiman2001random} & $0.772 \pm 0.002$ & $0.758 \pm 0.004$ & $0.769 \pm 0.001$ & $0.752 \pm 0.001$ & $0.759 \pm 0.001$ \\
             Transformer-NFN~\citep{tran2025equivariant} & $0.905 \pm 0.002$ & $0.899 \pm 0.001$ & $0.895 \pm 0.001$ & $0.895 \pm 0.002$ & $0.888 \pm 0.002$ \\
             Transformer-NFN large (\textcolor{teal}{$\mathbf{57.66\%}$ params $++$}) & $\underline{0.907 \pm 0.001}$ & $\underline{0.904 \pm 0.002}$ & $\underline{0.897 \pm 0.002}$ & $\underline{0.897 \pm 0.002}$ & $\underline{0.890 \pm 0.001}$ \\
             Transformer-NFN Quasi (Ours) (\textcolor{teal}{$\mathbf{4.54\%}$ params $++$}) & $\mathbf{0.911 \pm 0.001}$ & $\mathbf{0.905 \pm 0.001}$ & $\mathbf{0.898 \pm 0.002}$ & $\mathbf{0.897 \pm 0.001}$ & $\mathbf{0.892 \pm 0.001}$ \\
        \midrule
        \multicolumn{6}{c}{\textit{AGNews-Transformers}} \\  % Line before MNIST-Transformer
        \midrule
         MLP & ${0.879 \pm 0.006}$ & ${0.875 \pm 0.001}$ & $0.841 \pm 0.012$  & $0.842 \pm 0.001$ & $0.862 \pm 0.006$ \\
             STATNN~\citep{unterthiner2020predicting} & $0.841 \pm 0.002$ & $0.839 \pm 0.003$ & $0.812 \pm 0.003$ & $0.813 \pm 0.001$ & $0.812 \pm 0.001$ \\
             XGBoost~\citep{Chen_2016xgboost} & $0.859 \pm 0.001$ & $0.852 \pm 0.002$ & ${0.872 \pm 0.002}$ & ${0.874 \pm 0.001}$ & ${0.872 \pm 0.001}$ \\
             LightGBM~\citep{ke2017lightgbm} & $0.835 \pm 0.001$ & $0.845 \pm 0.001$ & $0.837 \pm 0.001$ & $0.835 \pm 0.001$ & $0.820 \pm 0.001$ \\
             Random Forest~\citep{breiman2001random} & $0.774 \pm 0.003$ & $0.801 \pm 0.001$ & $0.797 \pm 0.001$ & $0.798 \pm 0.002$ & $0.773 \pm 0.001$ \\
            Transformer-NFN~\citep{tran2025equivariant} & $0.910 \pm 0.001$ & $0.908 \pm 0.001$ & $0.897 \pm 0.001$ & $0.896 \pm 0.001$ & $0.890 \pm 0.001$ \\
             Transformer-NFN large (\textcolor{teal}{$\mathbf{59.38\%}$ params $++$}) & $\underline{0.913 \pm 0.001}$ & $\underline{0.910 \pm 0.002}$ & $\underline{0.898 \pm 0.002}$ & $\underline{0.898 \pm 0.001}$ & $\underline{0.893 \pm 0.002}$ \\
            Transformer-NFN Quasi (Ours) (\textcolor{teal}{$\mathbf{5.27\%}$ params $++$}) & $\mathbf{0.914 \pm 0.001}$ & $\mathbf{0.913 \pm 0.002}$ & $\mathbf{0.901 \pm 0.001}$ & $\mathbf{0.903 \pm 0.002}$ & $\mathbf{0.896 \pm 0.001}$ \\
      \bottomrule
    \end{tabular}
    \end{adjustbox}
    \vspace{-11pt}
\end{table}

\subsection{Predicting Transformers Generalization}

\textbf{Experiment Setup. } In this task, we predict the accuracies of pretrained Transformer checkpoints, aiming to test whether metanetworks capture structural patterns in Transformer weights. Following~\citep{tran2025equivariant}, we integrate our quasi-layer into Transformer-NFN and conduct evaluations on two datasets: MNIST-Transformers, built from models trained for MNIST image classification, and AGNews-Transformers, derived from models trained for AGNews text classification. Performance is assessed using Kendall’s $\tau$. To probe model performance under varying difficulty levels, we evaluate not only on the full dataset but also on four subsets defined by minimum accuracy thresholds of $20\%$, $40\%$, $60\%$, and $80\%$. Since most pretrained models in these datasets achieve high accuracy, maintaining strong Kendall’s $\tau$ becomes more difficult as the threshold increases.

\textbf{Results.}  
Table~\ref{tab:transformer-results} presents the results on MNIST-Transformer and AGNews-Transformer. In both benchmarks, scaling up Transformer-NFN to a larger version (with up to 59.38\% more parameters) can improve Kendall’s $\tau$, but adding the quasi-equivariant layer achieves even greater gains with far fewer extra parameters (only up to 5.27\%). The improvement holds across all accuracy thresholds, demonstrating both the efficiency and effectiveness of our approach.

\vspace{-8pt}
\section{Conclusions, Limitations, and Future Directions}
\vspace{-5pt}
This paper introduces quasi-equivariant metanetworks, a framework that relaxes strict equivariance to balance symmetry preservation with representational flexibility. By analyzing parameter spaces, their symmetry groups, and maximality, we establish a theoretical connection to functional equivalence and formalize quasi-equivariance as a principled extension of strict equivariance. We demonstrate applicability of the framework to feedforward networks and multihead attention, and validate its effectiveness across multiple metanetwork benchmarks, achieving substantial performance gains with minimal additional parameters. A current limitation of our work is that quasi-equivariance has so far been applied primarily to metanetworks with linear architectures. Extending this framework to more complex structures, such as graph-based metanetworks, remains unexplored due to the diversity and rarity of such architectures. Moreover, the quasi-equivariant design could be beneficial in various other fields, such as computational chemistry, physics, and materials science, where symmetries may only approximately hold and greater modeling flexibility is required. We view these as promising directions for future research.

%------------------------------------------------

% \subsubsection*{Author Contributions}
% If you'd like to, you may include  a section for author contributions as is done
% in many journals. This is optional and at the discretion of the authors.

% \subsubsection*{Acknowledgments}
% Use unnumbered third level headings for the acknowledgments. All
% acknowledgments, including those to funding agencies, go at the end of the paper.
\newpage
\subsubsection*{Acknowledgments}
This research / project is supported by the National Research Foundation Singapore under the AI
Singapore Programme (AISG Award No: AISG2-TC-2023-012-SGIL). This research / project is
supported by the Ministry of Education, Singapore, under the Academic Research Fund Tier 1
(FY2023) (A-8002040-00-00, A-8002039-00-00). This research / project is also supported by the
NUS Presidential Young Professorship Award (A-0009807-01-00) and the NUS Artificial Intelligence Institute--Seed Funding (A-8003062-00-00).

\textbf{Ethics Statement.} Considering the scope and focus of this work, we do not anticipate any adverse societal or ethical consequences.

\textbf{Reproducibility Statement.} The source code for all experiments is included in the paper’s supplementary materials. Detailed descriptions of our experimental setup can be found in Section~\ref{sec:experiments} and Appendix~\ref{appendix:experiments}. All datasets used are publicly accessible via an anonymous link provided in the README of the supplementary materials.

\textbf{LLM Usage. } In this work, we use large language models (LLMs) solely as a tool to assist and refine the presentation of our ideas. The LLM was employed only to improve clarity, grammar, and overall readability, without influencing the scientific content, methodology, or experimental results. All technical contributions, analyses, and conclusions in the paper are entirely the authors’ original work.

\bibliography{iclr2026_conference}
\bibliographystyle{iclr2026_conference}

\clearpage

%--------------------------------------------------------

\appendix

\section*{Table of Notation}

\begin{table}[h]
\renewcommand*{\arraystretch}{1.3}
    \begin{tabularx}{\textwidth}{p{0.40\textwidth}X}
    \toprule
    $\Theta$ & Parameter space of the network\\
    $W_i$ & Weight of feed forward neural network in layer $i$\\
    $b_i$ & Bias of feed forward neural network in layer $i$\\
    $\mathcal{G}_n$ & Monomial matrix of size $n$\\
    $h$ & Number of head of Attention module \\
    $d$ & Hidden dimension of the model \\
    $d_h$ & Hidden dimension of a head in the model \\
    $D_k$ & Dimension of key/query vector in Attention module \\
    $W^Q_i$ & Weight of query matrix of head $i$\\
    $W^K_i$ & Weight of key matrix of head $i$\\
    $W^V_i$ & Weight of value matrix of head $i$\\
    $W^O_i$ & Weight of out projection matrix of head $i$\\
    $G$ & Symmetric group of the weight space \\
    $\sigma()$ & Relu activation \\
    $S_h$ &  Head permutation group action in Attention module \\
    $E()$ & Equivariant layer \\
    $I()$ & Invariant layer \\
    $\mathbb{R}^d$ & $d$-dimensional Euclidean space \\
    $\left< \cdot, \cdot \right>$ & standard dot product \\
    $\sqcup$ & disjoint union \\
    $g$ & element of group \\
    $\textup{GL}(d_h)$ & General linear group of invertible $d_h \times d_h$ matrices over $\mathbb{R}$ \\
    $\alpha()$ & Quasi-equivariant map \\
    $\beta()$ & Equivariant map \\
    \toprule
     \end{tabularx}
\end{table}

\clearpage

\begin{center}
{\bf \Large{Supplement to ``Quasi-Equivariant Metanetworks''}}
\end{center}

\DoToC

\section{Quasi-Equivariant Metanetworks}

\textbf{Invariance and Equivariance in Machine Learning.}
Equivariant neural networks \citep{cohen2016group} incorporate task-specific symmetries directly into their architectures, leading to improved generalization and greater sample efficiency. They have achieved strong empirical success across diverse domains, including trajectory prediction \citep{walters2020trajectory}, robotics \citep{simeonov2022neural}, graph representation learning \citep{satorras2021n, tran2024clifford}, and Optimal Transport–based methods \citep{pham2026mixedcurvature,tran2026revisiting,tran2026treesliced,tran2024tree,tran2025tree,tran2025spherical,tran2025distance,tran2025nonlinear}. Leveraging equivariance consistently enhances performance, data efficiency, and robustness under distribution shifts.

\textbf{Quasi-Equivariance.}
In the main paper, we introduced the concept of quasi-equivariance and discussed its key insights. Here, we establish formal theoretical guarantees and provide rigorous proofs for the results related to this notion.

\subsection{On the Well-Definedness of the Quasi-Equivariance Property}
\label{appendix:welldefined}
We now examine the well-definedness of the quasi-equivariance property, as stated in Remark~\ref{main:remark-welldefined}. Although the construction of quasi-equivariant maps discussed in this section is not used in our implementation due to its inefficiency, we include the following analysis for completeness and for potential future work, where one may wish to construct quasi-equivariant maps using this approach.

\textbf{Setup.} Let a group $G$ act on a set $\Theta$. Let a (possibly the same) group $H$ act on a set $\mathcal X$. A map $F \colon \Theta \to \mathcal X$ is called \emph{$\alpha$–quasi–equivariant} if there exists
\begin{align}
    \alpha: G\times \Theta \to H
\end{align}
such that for all $g\in G$ and $\theta\in\Theta$,
\begin{equation}\label{eq:qe}
    F(g\theta) = \alpha(g,\theta)\cdot F(\theta).
\end{equation}
When $H=G$ and the group action of $H$ on $\mathcal X$ is the given group action of $G$, this reduces to our original formulation on quasi-equivariance.

\textbf{Well-definedness across different representatives.} 
The relation $g_1\theta_1=g_2\theta_2$ gives two representations of the same point in $\Theta$. Thus Equation~(\ref{eq:qe}) must produce the same value of $F$. A necessary and sufficient condition on $\alpha$ is presented as follows.

\begin{proposition}[Normalized $1$-cocycle condition]
\label{prop:cocycle}
The following statements are equivalent:
\begin{enumerate}
    \item For every choice of a section $S \subset \Theta$ meeting each $G$-orbit exactly once, and for every seed map $s : S \to \mathcal X$ satisfying the stabilizer constraint (see Proposition~\ref{prop:stabilizers}), there exists a unique map $F \colon \Theta \to \mathcal X$ obeying the quasi–equivariance relation
    \begin{equation}\label{eq:qe}
        F(g\theta) = \alpha(g,\theta)\cdot F(\theta), ~~\text{for all}~~g \in G, \theta \in \Theta,
    \end{equation}
    such that $F$ is independent of the choice of representative of a point in $\Theta$.
    \item The function $\alpha : G \times \Theta \to H$ satisfies, for all $g_1,g_2 \in G$ and $\theta \in \Theta$,
    \begin{align}
        \alpha(e,\theta) &= e_H, \label{eq:norm}\\
        \alpha(g_1g_2,\theta) &= \alpha(g_1,\,g_2\theta)\,\alpha(g_2,\theta). \label{eq:cocycle}
    \end{align}
\end{enumerate}
\end{proposition}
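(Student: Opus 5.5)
The proof is a two-way argument. The construction of $F$ from a section and a seed drives (2)$\Rightarrow$(1). The converse reads the cocycle identities off the relation~\eqref{eq:qe}. The stabilizer constraint mentioned in statement~(1) will be the one spelled out in Proposition~\ref{prop:stabilizers}: for $\theta\in S$ and every $g\in G_\theta=\{g\in G : g\theta=\theta\}$, we require $\alpha(g,\theta)\cdot s(\theta)=s(\theta)$. This condition is clearly necessary, since applying \eqref{eq:qe} with $g\theta=\theta$ forces $F(\theta)=\alpha(g,\theta)F(\theta)$.

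\textbf{(2)$\Rightarrow$(1).} Fix a section $S$ and a seed $s$ satisfying the stabilizer constraint. Every point of $\Theta$ can be written as $g\theta_0$ with $\theta_0\in S$ unique, and we define $F(g\theta_0)\coloneqq\alpha(g,\theta_0)\cdot s(\theta_0)$.

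First we check that this is well defined. If $g\theta_0=g'\theta_0$, then $k\coloneqq g^{-1}g'\in G_{\theta_0}$. The cocycle identity \eqref{eq:cocycle} gives $\alpha(g',\theta_0)=\alpha(gk,\theta_0)=\alpha(g,k\theta_0)\alpha(k,\theta_0)=\alpha(g,\theta_0)\alpha(k,\theta_0)$. The stabilizer constraint then gives $\alpha(g',\theta_0)s(\theta_0)=\alpha(g,\theta_0)s(\theta_0)$.

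Next we check \eqref{eq:qe} for an arbitrary point $\theta=g_2\theta_0$ and any $g_1\in G$. Using \eqref{eq:cocycle},
\[
F(g_1\theta)=\alpha(g_1g_2,\theta_0)s(\theta_0)=\alpha(g_1,g_2\theta_0)\alpha(g_2,\theta_0)s(\theta_0)=\alpha(g_1,\theta)F(\theta).
\]
Normalization \eqref{eq:norm} ensures $F|_S=s$, since $\theta_0=e\theta_0$.

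Uniqueness is immediate: any $F'$ satisfying \eqref{eq:qe} with $F'|_S=s$ must satisfy $F'(g\theta_0)=\alpha(g,\theta_0)s(\theta_0)=F(g\theta_0)$. I read ``unique'' in (1) as uniqueness given the seed, and would state this interpretation explicitly.

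\textbf{(1)$\Rightarrow$(2).} Normalization comes from \eqref{eq:qe} with $g=e$, which gives $F(\theta)=\alpha(e,\theta)F(\theta)$. For the cocycle identity, apply \eqref{eq:qe} twice:
\[
\alpha(g_1g_2,\theta)F(\theta)=F(g_1g_2\theta)=\alpha(g_1,g_2\theta)\alpha(g_2,\theta)F(\theta).
\]
Both relations hold only after acting on the single value $F(\theta)$, that is, modulo the stabilizer $H_{F(\theta)}$. To upgrade them to identities in $H$, I would exploit the freedom in (1): the relations hold for \emph{every} section and \emph{every} admissible seed. The approach is to vary the seed value $x=s(\theta_0)$ over all admissible values and use the fact that $\theta$ itself can be placed in a section. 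One then concludes that $\alpha(e,\theta)$ and $\alpha(g_1g_2,\theta)^{-1}\alpha(g_1,g_2\theta)\alpha(g_2,\theta)$ fix every admissible point. This yields (2) provided $H$ acts faithfully on the set of admissible seed values.

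\textbf{Main obstacle.} The main difficulty is this last upgrade from ``equal after acting on $F(\theta)$'' to genuine equality in $H$. If the action is not faithful, the literal converse fails: for example, take $H$ acting trivially on $\mathcal X$ with $\alpha$ arbitrary. I expect to handle this in one of two ways. The first is to add a freeness or faithfulness hypothesis on the $H$-action. The second is to interpret (2) modulo the kernel of the action, that is, to regard $\alpha$ as valued in the image of $H$ acting on $\mathcal X$. I would choose whichever matches the authors' Proposition~\ref{prop:stabilizers}.
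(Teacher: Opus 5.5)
Your argument is the paper's argument. Your (2)$\Rightarrow$(1) is its sufficiency step, written out in more detail: define $F(g\theta_0)=\alpha(g,\theta_0)\cdot s(\theta_0)$, then get well-definedness from $g^{-1}g'\in G_{\theta_0}$, the cocycle identity and the stabilizer constraint. Your two displayed relations for (1)$\Rightarrow$(2) are exactly its necessity step. The one difference is that the paper then cancels $F(\theta)$ and asserts the identities in $H$ without comment. You are right to refuse that step. Your example (a nontrivial $H$ acting trivially on $\mathcal X$, with $\alpha(e,\theta)\neq e_H$) satisfies (1) and violates (2). So the equivalence is false as written, and the paper's necessity argument fails exactly where you located it.

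Of your two repairs, only the first works, and it should be read pointwise: for each $\theta$, $e_H$ is the only element of $H$ fixing every $x$ fixed by $\alpha(G_\theta,\theta)$. Under this hypothesis your seed-varying argument goes through. For nontrivial $H$ it also guarantees that admissible seeds exist through every point, which your argument needs. Note, however, that it forces $\alpha(k,\theta)=e_H$ for all $k\in G_\theta$, i.e.\ the ``descent without constraints'' regime of Proposition~\ref{prop:stabilizers}.

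Passing to the image of $H$ in $\mathrm{Sym}(\mathcal X)$ does not suffice, because the relations hold only on admissible values, which may form a proper subset of $\mathcal X$. For example, take $\Theta=\{p\}$, $G=\{e,t\}\cong\mathbb{Z}/2$, and $H=S_3$ acting faithfully on $\mathcal X=\{a,b,c\}$. Let $\alpha(e,p)$ be the transposition of $b$ and $c$, and $\alpha(t,p)=e_H$. The only admissible seed is $s(p)=a$, and $F(p)=a$ is the unique map satisfying the relation for both group elements, so (1) holds. Yet $\alpha(e,p)\neq e_H$, even though the action is already faithful. If instead $\alpha(e,p)$ is a $3$-cycle, there is no admissible seed at all and (1) holds vacuously.

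What your computations actually prove in both directions, assuming every orbit contains a point with at least one admissible value, is that (1) is equivalent to
\[
\alpha(e,\theta)\cdot x=x,\qquad \alpha(g_1g_2,\theta)\cdot x=\alpha(g_1,g_2\theta)\,\alpha(g_2,\theta)\cdot x
\]
for all $g_1,g_2\in G$, all $\theta\in\Theta$, and all $x\in\mathcal X$ fixed by $\alpha(G_\theta,\theta)$. Your (2)$\Rightarrow$(1) construction uses the cocycle identity only in this form. That is the version worth stating, with (2) recovered under the pointwise faithfulness hypothesis.
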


\begin{proof} The proof proceeds as follows.

\textit{Necessity.} Fix $g_1,g_2,\theta$. Using Equation~(\ref{eq:qe}) twice,
\begin{align}
   F(g_1g_2\theta)=\alpha(g_1,\,g_2\theta)\cdot F(g_2\theta)
= \alpha(g_1,\,g_2\theta)\,\alpha(g_2,\theta)\cdot F(\theta).
\end{align}
On the other hand, applying Equation~(\ref{eq:qe}) once with $g_1g_2$ gives
\begin{align}
    F(g_1g_2\theta)=\alpha(g_1g_2,\theta)\cdot F(\theta),
\end{align}
so Equation~(\ref{eq:cocycle}) of $1$-cocycle follows. Setting $g=e$ in Equation~(\ref{eq:qe}) yields Equation~(\ref{eq:norm}) of Normalization.

\textit{Sufficiency.} Given any representative $g\theta_0$ of a point in the orbit of $\theta_0$, define $F(g\theta_0)$ by Equation~(\ref{eq:qe}). If $g_1\theta_0=g_2\theta_0$, then $g_2^{-1}g_1\in G_{\theta_0}$ and the cocycle identity implies the two definitions coincide provided the stabilizer constraint in Proposition~\ref{prop:stabilizers} holds.
\end{proof}

\textbf{Stabilizers and orbit descent.} Denote $G_\theta=\{h\in G \colon h\theta=\theta\}$.
The values of $\alpha$ on $G_\theta$ control whether $F$ is well-defined from orbit data alone.

\begin{proposition}[Stabilizer constraints]
\label{prop:stabilizers}
Assume Equations~(\ref{eq:norm}) and (\ref{eq:cocycle}). Then for each $\theta$ and $h\in G_\theta$, one has
\begin{align}
    F(\theta)=F(h\theta)=\alpha(h,\theta)\cdot F(\theta).
\end{align}
Hence $F(\theta)$ must lie in the fixed-point set
\begin{align}
    \textup{Fix}_\mathcal X\big(\alpha(G_\theta,\theta)\big)
=\{x\in\mathcal X: \alpha(h,\theta)\cdot x=x~\text{ for all }~ h\in G_\theta\}.
\end{align}
In particular:
\begin{itemize}
\item If $F$ is required to be definable independently of any additional constraints on its values (i.e., to descend to $\Theta/G$ without restricting the image), then a necessary and sufficient condition is
\begin{align}
    \alpha(h,\theta)=e_H\quad \text{for all } h\in G_\theta,\;\theta\in\Theta.
\end{align}
\item More generally, $F(\theta)$ is allowed to live in the moving fixed-point set above; then triviality on stabilizers is not required, but the image of $F$ is constrained.
\end{itemize}
\end{proposition}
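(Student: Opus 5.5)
The plan is to argue directly from the quasi-equivariance relation in Equation~(\ref{eq:qe}), applied to stabilizer elements, and then to split into the two regimes described in the bullets.

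\textbf{Core identity.} Fix $\theta \in \Theta$ and $h \in G_\theta$. Since $h\theta = \theta$, the left-hand side of Equation~(\ref{eq:qe}) with $g = h$ is $F(h\theta) = F(\theta)$, while the right-hand side is $\alpha(h,\theta)\cdot F(\theta)$. This gives the displayed chain $F(\theta)=F(h\theta)=\alpha(h,\theta)\cdot F(\theta)$. Quantifying over all $h \in G_\theta$ shows that $F(\theta)$ is fixed by every element of $\alpha(G_\theta,\theta)$, so it lies in $\textup{Fix}_\mathcal X(\alpha(G_\theta,\theta))$.

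I would also record, using Equations~(\ref{eq:norm}) and (\ref{eq:cocycle}), that $h \mapsto \alpha(h,\theta)$ restricted to $G_\theta$ is a group homomorphism. Indeed, for $h_1,h_2 \in G_\theta$ one has $\alpha(h_1h_2,\theta)=\alpha(h_1,h_2\theta)\alpha(h_2,\theta)=\alpha(h_1,\theta)\alpha(h_2,\theta)$. Hence $\alpha(G_\theta,\theta)$ is a subgroup of $H$, and the fixed-point set is the fixed set of that subgroup.

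\textbf{First bullet.} Here ``definable without restricting the image'' means that for every section $S$ and every seed $s\colon S\to\mathcal X$, the orbit extension $F(g\theta_0) := \alpha(g,\theta_0)\cdot s(\theta_0)$ is well defined.

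For sufficiency, suppose $g_1\theta_0 = g_2\theta_0$ and set $h = g_2^{-1}g_1 \in G_{\theta_0}$. The cocycle identity gives $\alpha(g_1,\theta_0) = \alpha(g_2h,\theta_0) = \alpha(g_2,h\theta_0)\,\alpha(h,\theta_0) = \alpha(g_2,\theta_0)\,\alpha(h,\theta_0)$. If $\alpha(h,\theta_0)=e_H$, the two candidate values coincide. Equation~(\ref{eq:qe}) then holds at all points $g\theta_0$, again by the cocycle identity.

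For necessity, suppose $\alpha(h,\theta_0)\neq e_H$ for some $\theta_0$ and $h\in G_{\theta_0}$. By the core identity, every seed value $s(\theta_0)$ must lie in $\textup{Fix}_\mathcal X(\alpha(h,\theta_0))$.

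\textbf{Main obstacle.} The difficulty lies in this necessity step. The fixed set of a nontrivial element is a proper subset of $\mathcal X$ only if the $H$-action on $\mathcal X$ is faithful, so I would add or invoke faithfulness as a standing assumption. Under that assumption, choosing a seed value outside the fixed set yields a contradiction.

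\textbf{Second bullet.} This case is the same computation read in the other direction. If $s(\theta_0)$ is chosen in the moving fixed-point set, the sufficiency argument goes through with $\alpha(h,\theta_0)\cdot s(\theta_0) = s(\theta_0)$ in place of $\alpha(h,\theta_0)=e_H$. So triviality on stabilizers is not needed, and the only constraint is the one on the image of $F$ derived in the core identity.
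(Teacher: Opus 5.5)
Your proposal is correct and follows the paper's reasoning: the paper gets the fixed-point constraint by the same substitution $g=h\in G_\theta$ into the quasi-equivariance relation, its only argument for sufficiency is the cocycle computation $\alpha(g_1,\theta_0)=\alpha(g_2,\theta_0)\,\alpha(g_2^{-1}g_1,\theta_0)$ sketched in the proof of the cocycle proposition, and the bullets are otherwise stated without proof. Your point that the necessity half of the first bullet needs the $H$-action on $\mathcal X$ to be faithful is a genuine correction the paper omits: for a trivial action every normalized cocycle admits extensions of all seeds, and in general the sharp condition is that each $\alpha(h,\theta)$ with $h\in G_\theta$ act trivially on $\mathcal X$.
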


\textbf{Gauge/coboundary equivalence.} 
Two quasi–equivariant structures related by a change of variables in $\mathcal X$ are equivalent.

\begin{proposition}[Gauge transform and trivial class]
\label{prop:gauge}
Let $\beta \colon \Theta\to H$. Define
\begin{align}
    \alpha^\beta(g,\theta)\coloneqq \beta(g\theta)\alpha(g,\theta)\,\beta(\theta)^{-1},\qquad
F^\beta(\theta)\coloneqq \beta(\theta)\cdot F(\theta).
\end{align}
Then $F$ satisfies Equation~(\ref{eq:qe}) with $\alpha$ if and only if $F^\beta$ satisfies Equation~(\ref{eq:qe}) with $\alpha^\beta$.
In particular, if $\alpha$ is a coboundary, i.e.
\begin{align}
    \alpha(g,\theta)=\beta(g\theta)\,\beta(\theta)^{-1},
\end{align}
then with $F'(\theta)\coloneqq \beta(\theta)^{-1}\cdot F(\theta)$ one has the strict equivariance:
\begin{align}
    F'(g\theta)&=F'(\theta)&&\text{if $H$ acts trivially,}\quad
\text{or} \\ F'(g\theta)&=g\cdot F'(\theta)&&\text{if $\beta$ is valued in $G$ and $H=G$.}
\end{align}
Thus, normalized cocycles modulo coboundaries classify quasi–equivariant structures up to gauge (a $H^1$ of the transformation groupoid $G\ltimes \Theta$ with coefficients in $H$).
\end{proposition}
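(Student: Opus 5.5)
The statement is Proposition~\ref{prop:gauge}, and the argument is a direct substitution; no earlier result is needed beyond the definition of $\alpha$-quasi-equivariance in Equation~(\ref{eq:qe}).

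\textbf{Forward direction.} Assume $F$ satisfies Equation~(\ref{eq:qe}) with $\alpha$, and compute
\begin{align*}
F^\beta(g\theta) = \beta(g\theta)\cdot F(g\theta) = \beta(g\theta)\alpha(g,\theta)\cdot F(\theta).
\end{align*}
Write $F(\theta)=\beta(\theta)^{-1}\cdot F^\beta(\theta)$; this uses only that $H$ acts on $\mathcal X$ by an action, so that $\beta(\theta)^{-1}\beta(\theta)$ acts as the identity. Substituting gives
\begin{align*}
F^\beta(g\theta) = \beta(g\theta)\alpha(g,\theta)\beta(\theta)^{-1}\cdot F^\beta(\theta) = \alpha^\beta(g,\theta)\cdot F^\beta(\theta).
\end{align*}

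\textbf{Converse.} The converse is the same computation with $\beta$ replaced by $\beta^{-1}$, i.e.\ the map $\theta\mapsto\beta(\theta)^{-1}$. One checks two things:
\begin{align*}
(\alpha^\beta)^{\beta^{-1}}(g,\theta) &= \beta(g\theta)^{-1}\,\beta(g\theta)\alpha(g,\theta)\beta(\theta)^{-1}\,\beta(\theta) = \alpha(g,\theta),\\
(F^\beta)^{\beta^{-1}}(\theta) &= \beta(\theta)^{-1}\beta(\theta)\cdot F(\theta) = F(\theta).
\end{align*}
So applying the forward direction to $F^\beta$ and $\alpha^\beta$ recovers $F$ and $\alpha$.

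\textbf{Coboundary case.} Suppose $\alpha(g,\theta)=\beta(g\theta)\beta(\theta)^{-1}$. Apply the equivalence with the gauge $\beta^{-1}$. The new cocycle is
\begin{align*}
\alpha^{\beta^{-1}}(g,\theta)=\beta(g\theta)^{-1}\,\beta(g\theta)\beta(\theta)^{-1}\,\beta(\theta)=e_H.
\end{align*}
Hence $F'=F^{\beta^{-1}}$ satisfies $F'(g\theta)=F'(\theta)$. This is invariance, and literally so when $H$ acts trivially.

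For the second alternative, where $H=G$ and $\beta$ is $G$-valued, strict equivariance $F'(g\theta)=g\cdot F'(\theta)$ needs a different gauge. Here I would read the intended coboundary as $\alpha(g,\theta)=\beta(g\theta)\,g\,\beta(\theta)^{-1}$, i.e.\ cohomologous to the constant cocycle $(g,\theta)\mapsto g$. The same computation then gives $\alpha^{\beta^{-1}}(g,\theta)=g$, hence equivariance. I would state this reading explicitly in the write-up.

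\textbf{Classification sentence.} For the final sentence, I would first check that $\alpha^\beta$ is again a normalized cocycle. Normalization holds since $\alpha^\beta(e,\theta)=\beta(\theta)e_H\beta(\theta)^{-1}=e_H$. For the cocycle identity, a telescoping product gives
\begin{align*}
\alpha^\beta(g_1,g_2\theta)\,\alpha^\beta(g_2,\theta) = \beta(g_1g_2\theta)\,\alpha(g_1,g_2\theta)\alpha(g_2,\theta)\,\beta(\theta)^{-1} = \alpha^\beta(g_1g_2,\theta),
\end{align*}
using Equation~(\ref{eq:cocycle}). Gauge equivalence is an equivalence relation because gauges compose: $(\alpha^{\beta_1})^{\beta_2}=\alpha^{\beta_2\beta_1}$. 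The classes are therefore by definition the nonabelian $H^1$ of the action groupoid, so this sentence is interpretive rather than something to prove.

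\textbf{Main obstacle.} The only real issue is the ambiguous equivariant clause, which I have handled by the reinterpretation above.
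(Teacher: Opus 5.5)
Your proof is correct. The paper states Proposition~\ref{prop:gauge} without any proof; it goes straight on to the regularity remark. So there is no alternative route to compare against. The forward substitution, the converse via the inverse gauge $\theta\mapsto\beta(\theta)^{-1}$, and your checks on the classification sentence are exactly what is needed. Those checks are that $\alpha^\beta$ is again a normalized cocycle and that gauges compose as $(\alpha^{\beta_1})^{\beta_2}=\alpha^{\beta_2\beta_1}$.

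Your treatment of the second coboundary clause is not merely a matter of interpretation: as literally written, that clause is false, so your reinterpretation is forced.

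\begin{itemize}
\item With $\alpha(g,\theta)=\beta(g\theta)\beta(\theta)^{-1}$, your computation shows that $F'$ is $G$-invariant for every $H$-action. So the hypothesis ``$H$ acts trivially'' in the first clause is superfluous.
\item Requiring in addition $F'(g\theta)=g\cdot F'(\theta)$ would force $F'$ to take values in the $G$-fixed points.
\item Concretely, take $H=G$, $\beta\equiv e_G$ (hence $\alpha\equiv e_G$), and a constant $F\equiv x_0$ with $x_0$ not fixed by $G$. This satisfies every hypothesis of the clause but not its conclusion.
\end{itemize}

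Your corrected cocycle $\beta(g\theta)\,g\,\beta(\theta)^{-1}$ is the gauge transform of the constant cocycle $(g,\theta)\mapsto g$. It is also precisely the cocycle of the paper's own layer in Section~\ref{sec:quasi-equiv}, where the gauge is called $\alpha$. If $F(\theta)=\alpha(\theta)\beta(\theta)$ with $\beta$ equivariant, then $F(g\theta)=\alpha(g\theta)\,g\,\alpha(\theta)^{-1}F(\theta)$, and ungauging returns the equivariant backbone $\beta$.

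State the counterexample explicitly in your write-up, so the reader sees that the change of reading is required rather than a choice.
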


\textbf{Regularity (topological/smooth settings).} 
If $\Theta,\mathcal X$ are topological (or smooth) spaces and the group actions are continuous (or smooth), then to ensure $F$ is continuous (or smooth) whenever the seed $s$ is, one additionally asks $\alpha(\cdot,\cdot)$ to be continuous (or smooth) and the group actions to be continuous (or smooth). The results above remain valid verbatim.

\textbf{Conclusion.} We have the following observations.
\begin{enumerate}
    \item \textbf{Strict equivariance.} If $\alpha(g,\theta)\equiv g$ (and $H=G$ with the given action), then Equation~(\ref{eq:cocycle}) is automatic and we recover the standard $G$–equivariance $F(g\theta)=g\cdot F(\theta)$.
    \item \textbf{Homomorphic twist.} If $\alpha(g,\theta)=\psi(g)$ for a homomorphism $\psi:G\to H$, then Equation~(\ref{eq:cocycle}) holds. To descend to orbits without image constraints one needs $\psi(h)=e_H$ for all $h\in G_\theta$ and all $\theta$ (i.e., $\psi$ trivial on all stabilizers).
    \item \textbf{Coboundary (gauge) case.} If $\alpha(g,\theta)=\beta(g\theta)\beta(\theta)^{-1}$, one can \emph{gauge} to a strictly equivariant $F'$ as in Proposition~\ref{prop:gauge}.
\end{enumerate}
To make sense of the quasi–equivariance relation $F(g\theta)=\alpha(g,\theta)\cdot F(\theta)$ independently of how a point of $\Theta$ is represented, the essential structural requirement is that $\alpha$ be a \emph{normalized $1$-cocycle} on the transformation groupoid $G\ltimes \Theta$ with values in $H$ (Equations~(\ref{eq:norm}) and \ref{eq:cocycle}). 
Descent to the orbit space without restricting the image of $F$ additionally demands \emph{triviality on stabilizers}. 
Up to gauge, quasi–equivariant structures are classified by the corresponding first cohomology set; coboundaries are precisely those that can be turned into strict equivariance by a change of variables.

\begin{remark}
The three conclusions regarding the conditions on $\alpha$ serve as necessary requirements for the existence of a map $F$ satisfying
    \begin{align}
        F(g\theta) = \alpha(g,\theta)F(\theta).
    \end{align}

\end{remark}

\subsection{Proof of Proposition~\ref{main:prop_stacking}}
\label{appendix:proof_stacking}
In this section, we provide the proof for Proposition~\ref{main:prop_stacking}.
\begin{proof}[Proof of Proposition~\ref{main:prop_stacking}] We provide a proof of each part of the proposition.

\textit{Part 1.} Assume $\varphi$ and $\psi$ are $G$-quasi-equivariant.
\begin{align}
\forall h\in G,\;\forall \theta\in\Theta,\;
\exists h_1\in G :\; \varphi(h\theta)=h_1 \varphi(\theta).
\end{align}
Then
\begin{align}
\forall h_1\in G,\;\forall \vartheta\in\Theta,\;
\exists h_2\in G:\; \psi(h_1\vartheta)=h_2 \psi(\vartheta).    
\end{align}
Taking $\vartheta=\varphi(\theta)$,
\begin{align}
\forall h\in G,\;\forall \theta\in\Theta,\;
\exists h_2\in G:\;
\psi(\varphi(h\theta)) = \psi(h_1 \varphi(\theta)) = h_2 \psi(\varphi(\theta)).
\end{align}
Hence
\begin{align}
\forall h\in G,\;\forall \theta\in\Theta,\;
\exists h_2\in G:\;
(\psi\circ \varphi)(h\theta)=h_2 (\psi\circ \varphi)(\theta),
\end{align}

so $\psi\circ \varphi$ is $G$-quasi-equivariant.

\textit{Part 2.} Assume $\varphi$ is $G$-quasi-equivariant and $\psi$ is $G$-invariant:
\begin{align}
\forall h\in G,\;\forall \theta\in\Theta,\;
\exists h_1\in G:\; \varphi(h\theta)=h_1 \varphi(\theta),
\end{align}
and
\begin{align}
\forall k\in G,\;\forall \vartheta\in\Theta:\;
\psi(k\vartheta)=\psi(\vartheta).
\end{align}
Taking $k=h_1$ and $\vartheta=\varphi(\theta)$,
\begin{align}
\forall h\in G,\;\forall \theta\in\Theta:\;
\psi(\varphi(h\theta))
= \psi(h_1 \varphi(\theta))
= \psi(\varphi(\theta)).
\end{align}
Therefore
\begin{align}
\forall h\in G,\;\forall \theta\in\Theta:\;
(\psi\circ \varphi)(h\theta)=(\psi\circ \varphi)(\theta),
\end{align}
so $\psi\circ \varphi$ is $G$-invariant.

The proof is complete.
\end{proof}

\section{Additional Details of Experiments}~\label{appendix:experiments}

\subsection{Details on Group Action Learning}
We provide more details on building the process to learn group actions for two cases: MLP/CNN and Transformers.
~\label{appendix:group-action-design}

Our quasi-equivariant layer is designed in two stages: feature selection and group action learning. The goal is to learn a group action ($\alpha$) from network weights and biases, and then apply it to the outputs of existing equivariant layers.

\begin{itemize}
\item \textbf{Feature Selection:} A straightforward approach is to flatten and concatenate weights and biases, but this introduces excessive parameters and becomes infeasible for large networks. To address this, inspired by STATNET~\citep{unterthiner2020predicting}, we instead compute statistical features: mean, variance, and five quantiles (0, 0.25, 0.5, 0.75, 1) of weights and biases. These features are concatenated into a compact representation that scales consistently with network size while retaining essential information.
\item \textbf{Group Action Learning:} We adopt a MLP to model the group action, tailored to the underlying MLP or Transformer weight space. To encourage stability, the action is learned around the identity. Inspired by Fourier analysis, where sine functions form basic signal components, we introduce a structured noise mechanism:
$\tilde{a} = \operatorname{sin}(W_{\text{scale}} x + b_{\text{scale}}) \cdot \epsilon +\{1_n,I_n\},$
where $W_{\text{scale}}$ and $b_{\text{scale}}$ are parameters of MLP, and $\epsilon$ is a small learnable factor. This generates mild oscillations centered at unity, preserving the base equivariant behavior while providing flexibility for improved learning.
\end{itemize}

\paragraph{Implementation of Quasi-Equivariant Layer for MLP/CNN weights.} 

In MLPs and CNNs, our goal is to learn a positive scale vector for each layer. The group action is applied to the output dimensions of the weights and biases by scaling all neurons in the output dimension, while correspondingly scaling the input dimension of the subsequent layer with the reciprocal value of the scale.  

To construct this, we first extract the weights and biases from the input network. For each layer, we compute seven statistical features: the mean, variance, and five quantiles (0, 0.25, 0.5, 0.75, 1). Features from weights and biases are concatenated, yielding a 14-dimensional representation per layer. Aggregating across all layers produces a feature vector of size $14 \cdot L$, where $L$ is the total number of layers in the network.  

The group action for each layer is parameterized as a positive scale vector of size $N_{out}$, where $N_{out}$ denotes the layer’s output dimension. We learn this vector using a Gated-MLP with hidden dimension 32, assigning a separate network to each layer. To ensure stability and positivity, we incorporate the structured noise mechanism described above. The resulting scale is then applied to the final equivariant layer of the metanetwork (specifically, Monomial-NFN).  
\paragraph{Implementation of Quasi-Equivariant Layer for Transformer weights.} 

In Transformers, our objective is to learn two invertible matrices, $M$ and $N$, for each layer, representing the GL group action applied to $W_q, W_k, W_v,$ and $W_o$. Specifically, $M$ is applied to the query and key matrices, yielding $W_qM^T$ and $W_kM^{-1}$. This ensures that the attention score remains unchanged, since  
\[
(W_qM^T)(W_kM^{-1})^T = W_qW_k^T.
\]  
Similarly, $N$ is applied to the value and output projection matrices, transforming them into $W_vN$ and $N^{-1}W_o$.  

To construct these transformations, we extract all weights and biases from the input network: $W_q, W_k, W_v, W_o, W_A, b_A, W_B, b_B$. For each layer, we compute seven statistical features—the mean, variance, and five quantiles (0, 0.25, 0.5, 0.75, 1). Concatenating features from all weights and biases gives a 56-dimensional representation per layer. Aggregating across $L$ layers results in a feature vector of size $56 \cdot L$.  

The group action for each layer is parameterized as two invertible matrices of shape $(D, D)$, where $D$ denotes the hidden dimension in attention, and each attention head has its own pair $(M, N)$. To learn these matrices, we employ an MLP with hidden dimension 32, assigning one network per layer. The MLP maps the feature vector to $n_h \cdot D \cdot D$, which is then reshaped into $(n_h, D, D)$. To guarantee stability and numerical soundness, we apply the structured noise mechanism described above. This procedure produces matrices that are invertible almost everywhere, allowing stable training. The learned transformations are finally applied to the last equivariant layer of the metanetwork (specifically, Transformer-NFN).  

\paragraph{On the design of quasi-equivariant layer.} In the MLP/CNN case, the map $\alpha$ consists of two parts: a constant map for the group $\mathcal{P}n$ and a map for the group $\mathbb{R}_{>0}^n$. To learn the latter map, we construct a network that can translate from the weight space to a diagonal matrix in $\mathbb{R}_{>0}^n$, which can also be represented as a vector in $\mathbb{R}^{n}$ with all positive entries. We compute statistical features of the weights, with shape $D$, to serve as input to the network. Therefore, the introduced network is basically a MLP $\{W_{\text{scale}}, b_{\text{scale}}\}$ that maps from $D\to n$.

The formula $\operatorname{sin}(W_{\text{scale}} x + b_{\text{scale}}) \cdot \epsilon + 1_n$ naturally relaxes the strict equivariance typically imposed in metanetworks. By introducing a sine function, we allow the transformation to gently oscillate around the identity, creating a smooth, controlled variation. This approach is inspired by Fourier analysis, where sine waves naturally introduce periodic fluctuations, offering flexibility without disrupting the overall structure. The small learnable parameter $\epsilon$ simply scales these oscillations, determining how much relaxation to apply. This design provides a natural way to balance stability and flexibility, enabling the model to remain expressive while avoiding the constraints of strict equivariance.

\subsection{Predicting CNN Generalization}
~\label{appendix:exp-predict-cnn}
\paragraph{Dataset.}  
The Small CNN Zoo provides a $\ReLU$ subset containing 6,050 samples for training and 1,513 for testing. To enlarge the dataset, we apply data augmentation with a factor of 2, generating one additional variant for every original instance. This procedure yields an augmented $\ReLU$ subset with 12,100 training examples and 3,026 test examples.  

\paragraph{Baselines.}  
We evaluate our model against five established baselines:  
\begin{itemize}
    \item \textbf{STATNN} \citep{unterthiner2020predicting}: extracts statistical features from network weights and biases.  
    \item \textbf{Graph-NN} \citep{kofinas2024graph}: models network parameters as graphs and applies Graph Neural Networks for processing.  
    \item \textbf{NP and HNP} \citep{zhou2024permutation}: integrate neuron permutation symmetries into neural functional networks.  
    \item \textbf{Monomial-NFN} \citep{tran2024monomial}: generalizes the action on weights from permutation matrices to monomial matrices, incorporating scaling and sign-flip symmetries.  
\end{itemize}

\paragraph{Model Configurations.}  
Our Monomial-NFN Quasi extends Monomial-NFN, which follows the architecture of \cite{zhou2024permutation}. The base model consists of three equivariant Monomial-NFN layers with 16, 16, and 5 channels, each followed by a $\ReLU$ activation. In our variant, the final Monomial-NFN layer is replaced with a Monomial-NFN Quasi layer, where a learned scale is applied to the output of the standard Monomial-NFN layer. 

The resulting weight-space features are then fed into an invariant Monomial-NFN layer with Monomial-NFN pooling, which contains learnable parameters. This pooling layer normalizes the weights along the hidden dimension and computes averages across rows (first layer), columns (last layer), or both (intermediate layers). The pooled output is flattened and projected to \( \mathbb{R}^{200} \), followed by an MLP with two hidden layers and $\ReLU$ activations. Finally, the output is linearly mapped to a scalar and passed through a sigmoid function.  

\paragraph{Training.}  
We train the model with Binary Cross Entropy (BCE) loss for 50 epochs, applying early stopping based on a validation threshold \( \tau \). On an A100 GPU, the full training process requires approximately 35 minutes.  

\paragraph{Other Baselines.}  
For Graph-NN, we adopt the official implementation provided in \citep{kofinas2024graph} (\href{https://github.com/mkofinas/neural-graphs}{\textcolor{cyan}{https://github.com/mkofinas/neural-graphs}}). The implementations of NP, HNP, and Monomial-NFN follow the setups in \citep{zhou2024permutation} (\href{https://github.com/AllanYangZhou/nfn}{\textcolor{cyan}{https://github.com/AllanYangZhou/nfn}}) and \citep{tran2024monomial}. For these models, we use three equivariant layers with channel sizes of 16, 16, and 5. The features extracted are processed through average pooling, and subsequently passed into three MLP layers, each with a hidden size of 200. We provide the number of parameters and hyperparameters in Table~\ref{tab:params-cnn} and Table~\ref{tab:hyperparams-predict-cnn}
\begin{table}[]
    \medskip
    \caption{Number of parameters of all models for prediciting generalization task.}
    \label{tab:params-cnn}
    \centering
    \begin{adjustbox}{width=0.9\textwidth}
    \begin{tabular}{ccccccc}
        \toprule
        \centering
        Model & STATNN & NP & HNP & Monomial-NFN & Monomial-NFN large & Monomial-NFN Quasi\\
        \midrule
        Number of parameters & 1.06M & 2.03M & 2.81M & 0.25M & 0.43 &0.26M \\
        \bottomrule
    \end{tabular}
    \end{adjustbox}
    \medskip

    \caption{Hyperparameters for Monomial-NFN and Monomial-NFN an on predicting generalization task.}
    \label{tab:hyperparams-predict-cnn}
    \centering
    \begin{adjustbox}{width=0.95\textwidth}
    \begin{tabular}{cccccccc}
      \toprule
      & Monomial-NFN dim & MLP dim & Loss & Optimizer & Learning rate & Batch size & Epoch \\
      \midrule
      Monomial-NFN  & [16,16,5] & 200 & Binary cross-entropy & Adam & 0.001 & 8 & 50 \\
      Monomial-NFN large & [20,20,5] & 200 & Binary cross-entropy & Adam & 0.001 & 8 & 50 \\
      Monomial-NFN Quasi & [16,16,5 (quasi layer)] & 200 & Binary cross-entropy & Adam & 0.001 & 8 & 50 \\
      \bottomrule
    \end{tabular}
    \end{adjustbox}
\end{table}

\subsection{Classifying implicit neural representations of images}
~\label{appendix:exp-classify-siren}
\paragraph{Dataset.}  
We employ the original INRs dataset, which includes three subsets: CIFAR-10, MNIST, and Fashion-MNIST. Each image in these datasets is represented using a single SIREN model, as described in \cite{zhou2024permutation}. Following the setup in \cite{tran2024monomial}, no data augmentation is applied. The numbers of training, validation, and test samples for each dataset are summarized in Table~\ref{tab:dataset-siren}.  

\paragraph{Baselines. } Follow ~\citep{zhou2024permutation, tran2024monomial}, we compare our method against MLP, NP, HNP, and Monomial-NFN. For baselines, we follow the architecture in \cite{zhou2024permutation}, with a reduced hidden dimension (512 → 256) to mitigate overfitting. All baseline models and our base model use a hidden dimension of 256. The hyperparameters of Monomial-NFN and the tuned version is given in Table~\ref{tab:hyperparams-mnf-classify-siren}

\paragraph{Model Configurations.}  
In these experiments, our architecture consists of two Monomial-NFN layers with sine activations, followed by one Monomial-NFN Quasi layer with absolute activation. The hidden dimensions for Monomial-NFN Quasi layers vary by dataset and are listed in Table~\ref{tab:hyperparams-classify-siren}. We also provide the parameters count for all models in Table~\ref{tab:params-classify-siren}

The subsequent design follows the NP and HNP models of \cite{zhou2024permutation}. Specifically, we apply a Gaussian Fourier transformation to encode the input with sine and cosine components, expanding from one dimension to 256 dimensions. When using NP as the base layer, the features are further processed by IOSinusoidalEncoding—a positional encoding tailored for NP—with a maximum frequency of 10 and six frequency bands. The encoded features are then passed through three NP or HNP layers with $\ReLU$ activations, followed by average pooling. The pooled output is flattened and fed into an MLP with two hidden layers of 1000 units each, also with $\ReLU$ activations. Finally, the output is linearly projected to a scalar.  

For the MNIST dataset, we insert a Channel Dropout layer (after each HNP $\ReLU$ activation) and a Dropout layer (after each MLP $\ReLU$ activation), both with a dropout rate of 0.1. Training uses Binary Cross Entropy (BCE) loss for 200,000 steps, which takes approximately 1 hour 50 minutes on an A100 GPU.  

\begin{table}[t]
    \caption{Dataset size for Classifying INRs task.}
    % \medskip
    \label{tab:dataset-siren}
    \centering
        \begin{adjustbox}{width=0.5\textwidth}

        \begin{tabular}{lccc}
      \toprule
        & Train & Validation & Test \\
      \midrule
        CIFAR-10 & 45000 & 5000 & 10000 \\
        MNIST & 45000 & 5000 & 10000 \\
        Fashion-MNIST & 45000 & 5000 & 20000\\
      \bottomrule
    \end{tabular}
    \end{adjustbox}
    \medskip
    \caption{Hyperparameters of Monomial-NFN and Monomial-NFN tuned (in parentheses) for each dataset in Classify INRs task.}
    \label{tab:hyperparams-mnf-classify-siren}
    \centering
        \renewcommand*{\arraystretch}{1}

    \begin{adjustbox}{width=0.9\textwidth}

        \begin{tabular}{cccc}
      \toprule
      \centering
       & {MNIST} & {Fashion-MNIST} & {CIFAR-10} \\
      \midrule
        Monomial-NFN hidden dim & 64 (128) & 64 (128) & 16 (64)\\
        Base model & HNP & NP & HNP \\
        Base model hidden dim & 256 & 256 & 256 \\
        MLP hidden neurons & 1000 & 500 & 1000 \\
        Dropout value & 0.1 & 0 & 0 \\
        Learning rate & 0.000075 & 0.0001 & 0.0001 \\
        Batch size & 32  & 32  & 32\\
        Number of training steps  & 200000 & 200000 & 200000\\
        Loss function & Binary cross-entropy & Binary cross-entropy & Binary cross-entropy\\
      \bottomrule
    \end{tabular}
    \end{adjustbox}

    \medskip
    \caption{Hyperparameters of Monomial-NFN Quasi for each dataset in Classify INRs task.}
    \label{tab:hyperparams-classify-siren}
    \centering
        \renewcommand*{\arraystretch}{1}

    \begin{adjustbox}{width=0.9\textwidth}

        \begin{tabular}{cccc}
      \toprule
      \centering
       & {MNIST} & {Fashion-MNIST} & {CIFAR-10} \\
      \midrule
        Scale network hidden dim & 32 & 32 & 32 \\
        Scale network weight initialization & Xavier & Xavier & Xavier \\
        Scale network $\epsilon$ initialization & 0.01 & 0.01 & 0.01 \\
        Monomial-NFN hidden dim & 64 & 64 & 16 \\
        Base model & HNP & NP & HNP \\
        Base model hidden dim & 256 & 256 & 256 \\
        MLP hidden neurons & 1000 & 500 & 1000 \\
        Dropout value & 0.1 & 0 & 0 \\
        Learning rate & 0.000075 & 0.0001 & 0.0001 \\
        Batch size & 32  & 32  & 32\\
        Number of training steps  & 200000 & 200000 & 200000\\
        Loss function & Binary cross-entropy & Binary cross-entropy & Binary cross-entropy\\
      \bottomrule
    \end{tabular}
    \end{adjustbox}
    \medskip
    \caption{Number of parameters of all models for classifying INRs task.}
    \label{tab:params-classify-siren}
    \centering
        \renewcommand*{\arraystretch}{1}

        \begin{adjustbox}{width=0.6\textwidth}

    \begin{tabular}{cccc}
        \toprule
        \centering
         &  CIFAR-10 & MNIST & Fashion-MNIST\\
        \midrule
        MLP & 2M & 2M & 2M \\
        NP & 16M & 15M & 15M  \\
        HNP & 42M & 22M & 22M \\
        Monomial-NFN & 16M & 22M & 20M \\
        Monomial-NFN tuned & 16.3M & 22.3M & 20.7M \\
        Monomial-NFN Quasi (ours) & 16.3M & 22.2M & 20.5M \\
        \bottomrule
    \end{tabular}
    \end{adjustbox}
\end{table}

\subsection{Predicting Transformers Generalization}
\paragraph{Dataset.}  
We use the Small Transformer Zoo dataset~\citep{tran2025equivariant}, which includes two subsets: MNIST-Transformer and AGNews-Transformer. These datasets are generated by training Transformer models on MNIST image classification and AGNews text classification tasks, while varying key hyperparameters such as training fraction, dropout rate, learning rate, and weight initialization. In total, the zoo contains 62,756 models for MNIST-Transformer and 63,796 models for AGNews-Transformer. Following the experimental setup in~\citep{tran2025equivariant}, we use the checkpoints at epoch 75, corresponding to 15,689 models in MNIST-Transformer and 15,949 models in AGNews-Transformer. The ratio of train, validation and test set is 0.7, 0.15, and 0.15, respectively. 

\paragraph{Baselines. }
The baselines are implemented following~\citep{tran2025equivariant}. More specifically:  
\begin{itemize}
    \item \textbf{MLP.} For the MLP baseline, each network component is treated independently. The corresponding weights are flattened and passed through dedicated MLPs: a single hidden layer with 50 units is used for both the transformer block and the embedding, while the classifier is modeled by a two-layer MLP with 50 units per layer. The outputs from these modules are concatenated and further processed by a final MLP that produces the prediction.  

    \item \textbf{STATNN} \citep{unterthiner2020predicting}. To adapt STATNN to transformer architectures, we first compute statistical summaries from the weights of the query, key, value, and output projections, together with the weights and biases of the two feedforward layers. These features are concatenated and given as input to a one-layer MLP with 256 hidden units. For the classifier, we retain the original STATNN feature extraction scheme, followed by another MLP with 256 hidden units. The embedding is handled separately with a one-layer MLP of 64 hidden units. The outputs from all three components are merged and passed through a final single-layer MLP to obtain the prediction.  

    \item \textbf{XGBoost} \citep{Chen_2016xgboost}, \textbf{LightGBM} \citep{ke2017lightgbm}, and \textbf{Random Forest} \citep{breiman2001random}. For tree-based approaches, we directly flatten all component weights and use them as input features. Across all three models, we set the hyperparameters to a maximum tree depth of 10, a minimum child weight of 50, and at most 256 leaves.  
\end{itemize}
\begin{table}
\centering
\caption{Number of parameters for all models}
\label{tab:params}
        \begin{adjustbox}{width=0.6\textwidth}

\begin{tabular}{lcc}
    \toprule
    \centering
    Model & MNIST-Transformers & AGNews-Transformers \\
    \midrule
    Transformer-NFN & 1.812M & 1.804M \\
    Transformer-NFN large & 2.857M & 2.857M \\
    Transformer-NFN quasi & 1.894M & 1.887M \\
    MLP & 0.933M & 0.896M \\
    STATNN & 0.203M & 0.168M \\
    \bottomrule
\end{tabular}
\end{adjustbox}
\end{table}

\paragraph{Model Configurations. }  
Our approach is built on the Transformer-NFN backbone, with the Transformer-NFN Quasi model consisting of three modules that process the weights of a transformer network. The embedding and classifier components are modeled as MLPs with ReLU activations, each applied independently to their respective inputs. The transformer block is handled separately using an invariant architecture: several equivariant Transformer-NFN Quasi layers with ReLU activations are applied to the two MLP submodules of the block, and their outputs are passed through an invariant polynomial Transformer-NFN layer. The resulting vectors from all components are concatenated and processed by a final MLP with Sigmoid activation to produce the prediction.  

In our experiments, the embedding module is implemented as a one-layer MLP with 10 hidden units, while the classifier is a two-layer MLP, each with 10 hidden units. For the transformer block, we use an equivariant Transformer-NFN Quasi layer with 10 hidden channels, followed by an invariant Transformer-NFN layer and an MLP that generates a 10-dimensional output vector. These outputs are concatenated and fed into a classification head to yield the final prediction.  

\subsection{Ablation on the MLP network for Group Action Learning}
\paragraph{Experiment Setup.} To assess the robustness of the scale network, we conduct experiments by systematically varying the hidden dimension of the MLPs. Our evaluation is carried out using the Transformer-NFN Quasi architecture on the AGNews-Transformers dataset. In particular, we investigate the capacity of the model to learn the two invertible matrices $M$ and $N$ under different hidden dimensions, specifically 8, 16, 32, and 64.

\begin{table}[t]
    \caption{\small Performance measured by Kendall's $\tau$ of all models on AGNews-Transformers dataset. Uncertainties indicate standard error over 5 runs.}
    \label{tab:abl-mlp-scale}
    \centering
    \renewcommand*{\arraystretch}{1}
    \begin{adjustbox}{width=1\textwidth}
    \begin{tabular}{lccccccc}
        \toprule
        & & \multicolumn{5}{c}{Accuracy threshold} \\
         & Quasi dim & No threshold & $20\%$ & $40\%$ & $60\%$ & $80\%$\\
        \midrule
        Transformer-NFN & - & $0.910 \pm 0.001$ & $0.908 \pm 0.001$ & $0.897 \pm 0.001$ & $0.896 \pm 0.001$ & $0.890 \pm 0.001$ \\
        \midrule
        Transformer-NFN Quasi & 8 & $0.911 \pm 0.002$ & $0.909 \pm 0.001$ & $0.897 \pm 0.001$ & $0.897 \pm 0.002$ & $0.891 \pm 0.001$ \\
        Transformer-NFN Quasi & 16 & $0.913 \pm 0.002$ & $0.911 \pm 0.001$ & $\mathbf{0.901 \pm 0.001}$ & $0.902 \pm 0.002$ & $0.894 \pm 0.001$ \\
        Transformer-NFN Quasi & 32 & $\mathbf{0.914 \pm 0.001}$ & $\mathbf{0.913 \pm 0.002}$ & $\mathbf{0.901 \pm 0.001}$ & $\mathbf{0.903 \pm 0.002}$ & $\mathbf{0.896 \pm 0.001}$ \\
        Transformer-NFN Quasi & 64 & $0.913 \pm 0.001$ & $0.911 \pm 0.001$ & $0.899 \pm 0.002$ & $\mathbf{0.903} \pm 0.002$ & $0.894 \pm 0.002$ \\
      \bottomrule
    \end{tabular}
    \end{adjustbox}
    \vspace{-12pt}
\end{table}

\paragraph{Results.} Table~\ref{tab:abl-mlp-scale} reports the performance of Transformer-NFN Quasi under different hidden dimensions of the scale network. Incorporating the quasi-equivariant layer consistently improves performance, with gains increasing as the hidden dimension grows, and reaching the highest score at 32 dimensions. Based on this observation, we select a hidden dimension of 32 for the final model.

\subsection{Experiments on Augmented AGNews-Transformers dataset}

\paragraph{Experiment Setup.}  
We evaluate the robustness of Transformer-NFN Quasi under strong group actions in weight space, testing whether the quasi layer compromises architectural symmetry. Following the setup in~\citep{tran2025equivariant}, we conduct experiments on the AGNews-Transformers dataset augmented with the group action $\mathcal{G}_\mathcal{U}$. Both training and test sets are 2-fold augmented: the original weights are retained, and additional weights are generated by applying permutations and scaling transformations to Transformer modules. The elements of $M$ and $N$ are uniformly sampled from $[-1, 1]$, $[-10, 10]$, and $[-100, 100]$.  

\paragraph{Results.}  
Table~\ref{tab:augmented_agnews} summarizes the results. Transformer-NFN shows stable Kendall’s $\tau$ values across all ranges of scaling, with a consistent score of $0.914$. While our model does not gain from increasingly large augmentation scales, its performance remains steady, highlighting the balanced trade-off of Transformer-NFN Quasi: it enhances the expressiveness of the base model while preserving its inherent symmetry.  

\begin{table}[h]
\centering
\caption{Performance measured by Kendall's $\tau$ of all models on augmented AGNews-Transformers dataset using the group action $\mathcal{G}_\mathcal{U}$. Uncertainties indicate standard error over 5 runs.}
\label{tab:augmented_agnews}
    \begin{adjustbox}{width=0.85\textwidth}

\begin{tabular}{lcccc}
    \toprule
    \centering
     &  Original & $[-1, 1]$ & $[-10, 10]$ & $[-100,100]$  \\
    \midrule
    XGBoost & $0.859 \pm 0.001$ & $0.799 \pm 0.003$ & $0.800 \pm 0.001$ & $0.802 \pm 0.003$ \\
    LightGBM & $0.835 \pm 0.001$ &  $0.785 \pm 0.003$ & $0.784 \pm 0.003$ & $0.786 \pm 0.004$\\
    Random Forest & $0.774 \pm 0.003$ & $0.714 \pm 0.001$ & $0.715 \pm 0.002$ & $0.716 \pm 0.002$\\
    MLP & ${0.879 \pm 0.006}$ & ${0.830 \pm 0.002}$ & ${0.833 \pm 0.002}$ & ${0.833 \pm 0.005}$ \\
    STATNN & $0.841 \pm 0.002$ & $0.793 \pm 0.003$ & $0.791 \pm 0.003$ &  $0.771 \pm 0.013$ \\
    \midrule
    Transformer-NFN & $\underline{0.910 \pm 0.001}$ & $\underline{0.912 \pm 0.001}$ & $\underline{0.912 \pm 0.002}$ & $\underline{0.913 \pm 0.001}$ \\
    Transformer-NFN Quasi& $\mathbf{0.914 \pm 0.001}$ & $\mathbf{0.914 \pm 0.002}$ & $\mathbf{0.914 \pm 0.002}$ & $\mathbf{0.914 \pm 0.002}$ \\
    \bottomrule
\end{tabular}
\end{adjustbox}
\end{table}

\subsection{Analysis on the learned scaling}
\paragraph{Experiment Setup.} To analyze the sensitivity of $\epsilon$ and the learned scaling ($\operatorname{sin}(\gamma(\theta))\cdot \epsilon + 1_n$) in MLP/CNN case, we conduct experiments on the task of Predicting CNN Generalization. We introduce learned scale layers (with corresponding $\epsilon$) on top of Monomial-NFN equivariant layers and explore two cases: one with a fixed $\epsilon$ and one with a learnable $\epsilon$. The results are presented in Table~\ref{tab:ablation-epsilon}. The table reports the following values:

\begin{itemize}
    \item {Initial $\epsilon$}: The initial value of $\epsilon$
    \item {Learnable/Fixed}: How $\epsilon$ changes during training
    \item {Final $\epsilon$}: The final value of the first layer $\epsilon$ after training
    \item {Learned scale}: The first layer learned scaling ($\operatorname{sin}(W_{\text{scale}} x + b_{\text{scale}}) \cdot \epsilon + 1_n$)
    \item {Kendall's $\tau$}: Evaluation metric (Higher is better)
\end{itemize}

\begin{table}[h]
\centering
\caption{Analysis on the learned scaling with varying $\epsilon$}
\label{tab:ablation-epsilon}
\begin{tabular}{ccccc}
\toprule
{Initial $\epsilon$} & {Learnable/Fixed} & {Final $\epsilon$} & {Learned scale} & {Kendall's $\tau$} \\ \midrule
0 (Baseline)               & -                        & -                         & -                       & 0.922                     \\
0.001                       & Fixed                    & 0.001                     & $1.000\pm0.001$         & 0.922                     \\
0.01                        & Fixed                    & 0.01                      & $0.999\pm0.007$         & 0.923                     \\
0.1                         & Fixed                    & 0.1                       & $1.003\pm0.079$         & 0.923                     \\
0.2                         & Fixed                    & 0.2                       & $0.933\pm0.135$         & 0.923                     \\
0.4                         & Fixed                    & 0.4                       & $0.972\pm0.378$         & 0.924                     \\
0.001                       & Learnable                & 0.966                     & $0.796\pm0.668$         & 0.925                     \\
0.01                        & Learnable                & 0.970                     & $0.941\pm0.710$         & 0.926                     \\
0.1                         & Learnable                & 0.971                     & $0.918\pm0.758$         & 0.926                     \\
0.2                         & Learnable                & 0.946                     & $0.743\pm0.626$         & 0.925                     \\
0.4                         & Learnable                & 0.988                     & $0.762\pm0.624$         & 0.925                     \\
\bottomrule
\end{tabular}
\end{table}

\paragraph{Results.} When $\epsilon$ is fixed at a small value, the learned scaling slightly deviates from the identity, resulting in marginal performance gains. However, when $\epsilon$ is learnable, the learned scaling can deviate further from the identity, leading to a broader scaling range that enhances performance. Additionally, when $\epsilon$ is learnable, the final $\epsilon$ values tend to converge, regardless of their initial values. For larger values of $\epsilon$, instability may arise due to significant deviations in the early steps. Consequently, in our study, we choose a learnable $\epsilon$ with an initial value of 0.01 to ensure training stability.

\subsection{Sensitivity of $\epsilon$}
\paragraph{Experiment Setup.} We present an ablation study on the sensitivity of $\epsilon$ by conducting the "Predict Transformer Generalization" experiment on the MNIST-Transformer dataset. Specifically, we analyze two cases: when $\epsilon$ is learnable versus fixed, and report the performance alongside statistics of learned noises ($\operatorname{sin}(\gamma(\theta))\cdot \epsilon$) for the matrices $M$ and $N$, which represent the two $GL$ group actions applied to the weights of the MHA layer. For each noise, we compute the mean and standard deviation of the diagonal and off-diagonal elements. The results are summarized in Table~\ref{tab:abl-epsilon-transformer}. The table reports the following values:
\begin{itemize}
    \item Initial $\epsilon$: The initial value of $\epsilon$ for both $M$ and $N$.
    \item Learnable/Fixed: Whether $\epsilon$ is changed during training.
    \item Final $\epsilon_M$, $\epsilon_N$: The final value of $\epsilon$ after training.
    \item Diagonal $M$, $N$: Mean and standard deviation of the learned noises $\operatorname{sin}(\gamma(\theta)) \cdot \epsilon$ for $M$ and $N$, computed for diagonal elements.
    \item Off-diagonal $M$, $N$: Mean and standard deviation of the learned noises $\operatorname{sin}(\gamma(\theta)) \cdot \epsilon$ for $M$ and $N$, computed for off-diagonal elements.
    \item Kendall's $\tau$: Evaluation metric (higher is better).
\end{itemize}

\begin{table}[h]
\centering
\caption{Ablation study on $\epsilon$ with metrics for M and N.}
\begin{adjustbox}{width=1.05\textwidth}
\begin{tabular}{cccccccccc}
\toprule
Initial $\epsilon$ & Type & Final $\epsilon_M$ & Final $\epsilon_N$ & Diagonal M & Off-diagonal M & Diagonal N & Off-diagonal N & Kendall's $\tau$ \\
\midrule
Baseline & - & - & - & - & - & - & - & 0.905 \\
0.001 & Fixed & 0.001 & 0.001 & $0.000\pm0.000$ & $0.062\pm0.242$ & $0.000\pm0.000$ & $0.063\pm0.242$ & 0.909 \\
0.01 & Fixed & 0.01 & 0.01 & $0.001\pm0.002$ & $0.062\pm0.242$ & $-0.001\pm0.005$ & $0.062\pm0.242$ & 0.909 \\
0.1 & Fixed & 0.1 & 0.1 & $0.005\pm0.020$ & $0.062\pm0.245$ & $-0.038\pm0.065$ & $0.061\pm0.244$ & 0.907 \\
0.2 & Fixed & 0.2 & 0.2 & $0.010\pm0.040$ & $0.061\pm0.249$ & $-0.002\pm0.144$ & $0.059\pm0.285$ & 0.910 \\
0.4 & Fixed & 0.4 & 0.4 & $0.021\pm0.080$ & $0.060\pm0.265$ & $-0.055\pm0.221$ & $0.073\pm0.323$ & 0.909 \\
0.001 & Learnable & 0.001 & -0.187 & $0.000\pm0.000$ & $0.063\pm0.242$ & $-0.068\pm0.117$ & $0.059\pm0.260$ & 0.908 \\
0.01 & Learnable & 0.010 & 0.182 & $0.001\pm0.002$ & $0.062\pm0.241$ & $-0.056\pm0.084$ & $0.061\pm0.244$ & 0.911 \\
0.1 & Learnable & 0.099 & 0.234 & $0.005\pm0.020$ & $0.062\pm0.245$ & $-0.085\pm0.114$ & $0.060\pm0.250$ & 0.910 \\
0.2 & Learnable & 0.200 & 0.290 & $0.010\pm0.040$ & $0.061\pm0.249$ & $-0.011\pm0.218$ & $0.062\pm0.316$ & 0.909 \\
0.4 & Learnable & 0.400 & 0.298 & $0.021\pm0.079$ & $0.060\pm0.265$ & $-0.011\pm0.179$ & $0.065\pm0.298$ & 0.909 \\
\bottomrule
\end{tabular}
\end{adjustbox}
\label{tab:abl-epsilon-transformer}
\end{table}

\paragraph{Results.} When the $GL$ group is learned through our quasi-equivariant layers, the off-diagonal elements provide additional flexibility, leading to a noticeable improvement in performance even with a fixed $\epsilon$. With a learnable $\epsilon$, the range of values for $N$ expands and become more stable, resulting in better predictive performance. Overall, this study suggests that increasing the relaxation through $\epsilon$ can enhance performance. For our experiments, we select a learnable $\epsilon$ with an initial value of 0.01 to ensure the training stability.

\subsection{Weight space style editing}

\paragraph{Experiment Setup.} In this experiment, we focus on modifying the image content encoded in each SIREN model~\citep{DBLP:conf/nips/SitzmannMBLW20} by adjusting its weights. Specifically, we leverage pretrained models from~\citep{zhou2024permutation}, which encode images from the CIFAR-10 and MNIST datasets. The experimental setup follows the approach described in~\citep{zhou2024permutation, tran2024monomial}. Our main goal is to explore two tasks that involve altering the embedded information within the SIREN model: first, enhancing the contrast of CIFAR-10 images, and second, applying dilation to MNIST images.

To evaluate the effectiveness of these modifications, we compute the mean squared error (MSE) loss between the images encoded by the modified SIREN network and the corresponding ground truth images, which have undergone contrast enhancement for CIFAR-10 and dilation for MNIST. The baseline for comparison is the Monomial-NFN model~\citep{tran2024monomial}, which is already a highly optimized version. In this case, further increasing the parameters does not lead to a noticeable improvement in performance, serving as a useful benchmark for our experiments. The results from these tasks are summarized in Table~\ref{tab:weight-space-style-editing}.

\begin{table}[h]
\caption{Weight space style editing (INRs editing)}
\label{tab:weight-space-style-editing}
\centering
\begin{adjustbox}{width=0.8\textwidth}
\begin{tabular}{lcc}
\toprule
{Model} & {Contrast (CIFAR-10)} & {Dilate (MNIST)} \\
\midrule
MLP & $0.031 \pm 0.001$ & $0.306 \pm 0.001$ \\
HNP~\citep{zhou2024permutation} & $0.021 \pm 0.001$ & $0.071 \pm 0.001$ \\
NP~\citep{zhou2024permutation} & $\underline{0.020 \pm 0.002}$ & $\underline{0.068 \pm 0.002}$ \\
Monomial-NFN~\citep{tran2024monomial} & $\underline{0.020 \pm 0.001}$ & $0.069 \pm 0.002$ \\
Monomial-NFN Quasi (\textcolor{teal}{\textbf{1\%} params ++}) & $\mathbf{0.019 \pm 0.001}$ & $\mathbf{0.066 \pm 0.001}$ \\
\bottomrule
\end{tabular}
\end{adjustbox}
\end{table}

\paragraph{Results.} In this task, Monomial-NFN model initially underperforms compared to the NP model. However, by incorporating our newly introduced Quasi layer, which adds only 1\% more parameters, we are able to significantly improve the performance of the Monomial-NFN model. This enhancement allows it to surpass the performance of the NP baseline in both tasks, demonstrating the effectiveness of our approach with minimal increase in model complexity.

\end{document}